




\documentclass{ecai} 



\usepackage{latexsym}
\usepackage{amsmath}
\usepackage{xspace}
\usepackage{float}
\usepackage{amsfonts}
\usepackage{amssymb}
\usepackage{amsthm}
\usepackage{mathtools}
\usepackage{graphicx}
\usepackage{caption}
\usepackage{subfig}
\usepackage{tabularx}
\usepackage{multirow}
\usepackage{xcolor}
\usepackage{pifont}
\usepackage{bm}
\usepackage{arydshln}
\usepackage[linesnumbered,ruled,vlined]{algorithm2e}
\usepackage{todonotes}

\usepackage{paralist}

\newtheorem{theorem}{Theorem}
\newtheorem{lemma}[theorem]{Lemma}

\newtheorem{definition}{Definition}







\newcommand{\BibTeX}{B\kern-.05em{\sc i\kern-.025em b}\kern-.08em\TeX}


\newcommand{\myi}{(\emph{i})\xspace}
\newcommand{\myii}{(\emph{ii})\xspace}
\newcommand{\myiii}{(\emph{iii})\xspace}

 
 \newcommand{\D}{\mathcal{D}}
 \newcommand{\F}{\mathcal{F}}
\newcommand{\G}{\mathcal{G}}

\newcommand{\LL}{\mathcal{L}}
 
\renewcommand{\O}{\mathcal{O}}
\newcommand{\PP}{\mathcal{P}}
 \newcommand{\R}{\mathcal{R}}

\newcommand{\U}{\mathcal{U}}

 \newcommand{\X}{\mathcal{X}}
\newcommand{\Y}{\mathcal{Y}} 



\newcommand{\Wnext}{\raisebox{-0.27ex}{\LARGE$\bullet$}}
\newcommand{\Next}{\raisebox{-0.27ex}{\LARGE$\circ$}}
\newcommand{\Until}{\mathop{\U}}
\newcommand{\Release}{\mathop{\R}}

\newcommand{\trueVal}{\mathit{true}} 
\newcommand{\falseVal}{\mathit{false}} 
\newcommand{\ttrue}{\mathit{tt}} 
\newcommand{\ffalse}{\mathit{ff}} 

\newcommand{\LTL}{{\sc ltl}\xspace}

\newcommand{\LTLf}{{\sc ltl}$_f$\xspace}
\newcommand{\ltlf}{{\sc ltl}$_f$\xspace}

\newcommand{\DFA}{{\sc dfa}\xspace}
\newcommand{\DFAs}{{\sc dfa}s\xspace}

\newcommand{\dfa}{{\sc dfa}\xspace}



\newcommand{\Nat}{{\rm I\kern-.23em N}}

\newcommand{\roundBra}[1]{\left( #1 \right)}
\newcommand{\curlyBra}[1]{\left\{ #1 \right\}}

\def\BDD{\textsf{BDD}\xspace}

\def\tople{{\sf Tople}\xspace}

\def\nike{{\sf Nike}\xspace}
\def\lisa{{\sf Lisa}\xspace}
\def\lydia{{\sf LydiaSyft}\xspace}

\def\tool{{\sf Cosy}\xspace}

\def\awin{\textnormal{\sf\small awin}\xspace}
\def\awr{\textnormal{\sf\small awr}\xspace}

\newcommand{\aequiv}{\mathop{\equiv_a}}

\SetKwProg{myproc}{function}{}{}
\SetKwRepeat{Do}{do}{while}

\SetKw{Break}{break}
\SetKw{Continue}{continue}
\SetKw{Init}{initialize}
\SetKw{Stack}{stack}

\SetKwFunction{add}{add}
\SetKwFunction{remove}{remove}
\SetKwFunction{GroupByCommonSysVar}{GroupByCommonSysVar}
\SetKwFunction{SolveInterferingPart}{SolveInterferingPart}
\SetKwFunction{IsRealizable}{IsRealizable}
\SetKwFunction{GetSwr}{GetSwr}
\SetKwFunction{GetAwr}{GetAwr}
\SetKwFunction{BuildAwr}{BuildAwr}
\SetKwFunction{SolveDfaGame}{SolveDfaGame}
\SetKwFunction{IncreComposeOnce}{IncreComposeOnce}
\SetKwFunction{DFSearch}{DFSearch}
\SetKwFunction{ReadSuccessor}{ReadSuccessor}
\SetKwFunction{IsAccepting}{IsAccepting}
\SetKwFunction{GetSuccessor}{GetSuccessor}
\SetKwFunction{ComposeIncrementally}{ComposeIncrementally}
\SetKwFunction{ComposeIndividually}{ComposeIndividually}
\SetKwFunction{Compose}{Compose}
\SetKwFunction{BuildStrategy}{BuildStrategy}

\SetKwFunction{EwinAgentChoices}{EwinAgentChoices}
\SetKwFunction{IsSccRoot}{IsSccRoot}
\SetKwFunction{GetScc}{GetScc}
\SetKwFunction{ForwardSearch}{ForwardSearch}
\SetKwFunction{BackwardSearch}{BackwardSearch}
\SetKwFunction{GetEdge}{GetEdge}
\SetKwFunction{CheckY}{CheckY}
\SetKwFunction{GetUnserchedX}{GetUnserchedX}
\SetKwFunction{BlockYFromTrancducer}{BlockYFromTrancducer}
\SetKwFunction{AddWeakSwinTransition}{AddWeakSwinTransition}


\begin{document}


\begin{frontmatter}


\paperid{7880} 


\title{A Compositional Framework for \\ On-the-Fly \ltlf Synthesis}


\author[A]{\fnms{Yongkang}~\snm{Li}} 
\author[A]{\fnms{Shengping}~\snm{Xiao}} 
\author[B]{\fnms{Shufang}~\snm{Zhu}} 
\author[A]{\fnms{Jianwen}~\snm{Li}\thanks{Corresponding Author. Email: lijwen2748@gmail.com}}
\author[A]{\fnms{Geguang}~\snm{Pu}}

\address[A]{East China Normal University, Shanghai, China}
\address[B]{University of Liverpool, Liverpool, UK}



\begin{abstract}
Reactive synthesis from Linear Temporal Logic over finite traces (\ltlf) can be reduced to a two-player game over a Deterministic Finite Automaton~(\DFA) of the \ltlf specification. The primary challenge here is \DFA construction, which is 2EXPTIME-complete in the worst case. Existing techniques either construct the \DFA compositionally before solving the game, leveraging automata minimization to mitigate state-space explosion, or build the \DFA incrementally during game solving to avoid full \DFA construction. However, neither is dominant. In this paper, we introduce a compositional on-the-fly synthesis framework that integrates the strengths of both approaches, focusing on large conjunctions of smaller \ltlf formulas common in practice. This framework applies composition during game solving instead of automata~(game arena) construction. While composing all intermediate results may be necessary in the worst case, pruning these results simplifies subsequent compositions and enables early detection of unrealizability. Specifically, the framework allows two composition variants: pruning before composition to take full advantage of minimization or pruning during composition to guide on-the-fly synthesis. Compared to state-of-the-art synthesis solvers, our framework is able to solve a notable number of instances that other solvers cannot handle. A detailed analysis shows that both composition variants have unique merits.

\end{abstract}

\end{frontmatter}


\section{Introduction}

A key challenge in Artificial Intelligence~(AI) is enabling intelligent agents to autonomously plan and execute complex actions to achieve desired tasks~\citep{Reiter2001,GhallabNauTraverso2016}. This challenge aligns with reactive synthesis in Formal Methods, where an agent operates in an adversarial environment, controlling certain variables while the environment controls others. Given a task specification, the agent must devise a strategy to achieve the task despite possible environmental reactions~\citep{Pnu85}. Reactive synthesis also shares deep similarities with planning in fully observable nondeterministic domains (FOND, strong plans)~\citep{Cimatti03,GeffnerBonet2013}.

In Formal Methods, tasks are typically specified using Linear Temporal Logic (\LTL)~\citep{Pnu77}. In AI, a finite trace variant of \LTL, i.e., \LTLf~\citep{GV13}, is popular, reflecting the fact that intelligent agents usually handle tasks one after another rather than dedicating to a single task all their lifetime. Apart from reactive synthesis, \LTLf has been extensively applied in various domains such as automated planning with temporal goals and trajectory constraints~\citep{pddl3,camacho2019strong}, rewards in reinforcement learning~\citep{GLMF19}, and reasoning about business processes~\citep{PBV10,AGCFJ20}. In this work, we focus on \LTLf synthesis~\citep{GV15}.

\LTLf synthesis can be solved by reducing it to an adversarial reachability game on a Deterministic Finite Automaton (\DFA) that recognizes the same language as the \LTLf formula~\citep{GV15}. This process involves two parts: \DFA construction and \DFA game solving. While the \DFA game can be solved in linear time~\citep{Gradel07}, the \DFA itself can be, in the worst case, double-exponential in the size of the formula~\citep{GV13}, making its construction the bottleneck in \ltlf synthesis~\citep{ZTLPV17}.

Existing approaches to \LTLf synthesis address this challenge through two common directions: backward or forward. The backward approach constructs a complete \DFA in a compositional manner, applying \DFA minimization at each composition step to mitigate state space explosion~\citep{BLTV20,DF21} and solves the \DFA game via backward search using efficient symbolic techniques~\citep{ZTLPV17}. The forward approach constructs the \dfa on the fly, starting from the initial state and progressively building the \dfa while simultaneously solving the \dfa game~\citep{XLZSPV21,GFLVXZ22,XLHXLPSV24,XLZSLPV24}, to possibly avoid the double-exponential blowup.
However, neither approach is dominant while both demonstrate strengths as well as inefficiency on certain benchmarks~\citep{XLZSLPV24}. 

A natural question arises: \emph{can we integrate the strengths of both \DFA minimization and on-the-fly synthesis to maximize their advantages?} We demonstrate both theoretically and experimentally that \LTLf specifications in the form of large conjunctions of smaller \LTLf formulas offer a good compromise between expressiveness and synthesis efficiency.



We introduce a compositional framework for on-the-fly \ltlf synthesis.
Within this framework, we first decompose the original synthesis problem, i.e., $\varphi = \varphi_1 \wedge \varphi_2 \wedge \cdots \wedge \varphi_n$, w.r.t. conjunctions into sub-problems, i.e., $\varphi_1, \varphi_2,  \cdots, \varphi_n$. Then we process each $\varphi_i$ utilizing forward synthesis and compose the intermediate results to solve the original problem on $\varphi$. In this case, we conduct composition at the synthesis~(game solving) level, rather than during \DFA construction. Hence, leveraging the conjunction form of $\varphi$, we can immediately conclude $\varphi$ being unrealizable as soon as some part of $\varphi$ is found unrealizable, without further composition.

The crux of the compositional framework lies in efficiently composing intermediate results from sub-problems to speed up subsequent synthesis. The efficiency of forward \LTLf synthesis heavily relies on the size of the search space and the search direction. To reduce the search space, we can utilize \DFA minimization. Suppose we have two sub-problems $\varphi_i$ and $\varphi_j$. To synthesize their conjunction $\varphi_i \wedge \varphi_j$, we can synthesize them separately, build their respective \emph{winning regions} as \DFAs to ensure their realization under all possible environment behaviours, minimize these \DFAs, compose them and minimize again to obtain the ultimate search space for $\varphi_i \wedge \varphi_j$. This composition is referred to as \emph{individual composition}. However, this method may produce states that are irrelevant to realize $\varphi_i \wedge \varphi_j$. Specifically, if $\varphi_j$ is relatively large, constructing its winning region could be inefficient. To address this, we propose an \emph{incremental composition} variant that utilizes the previously computed winning region of $\varphi_i$ to guide the search of the second sub-problem $\varphi_j$, performing an implicit conjunction during the search and directly obtaining the synthesis result for $\varphi_i \wedge \varphi_j$.

We implemented our synthesis framework in a prototype tool called \tool. To evaluate the efficiency of the proposed composition techniques, we conducted an empirical evaluation by comparing \tool with state-of-the-art \LTLf synthesizers. Experimental results show that our compositional synthesis framework solves a notable number of instances that other synthesizers cannot handle, demonstrating that it effectively combines the merits of both backward and forward synthesis approaches for \LTLf specifications. A detailed comparison of the two composition variants shows that, while the incremental variant solves slightly more instances, both variants have their strengths and advantages. The full version of the paper including appendix is available in~\cite{li2025compositionalframeworkontheflyltlf}.

\section{Preliminaries}



\subsection{\LTLf Basics}
Linear Temporal Logic over finite traces, or \ltlf~\citep{GV13}, extends propositional logic with finite-horizon temporal connectives. \LTLf is a semantics variant of Linear Temporal Logic~(\LTL)~\citep{Pnu77} adapted for finite traces. Given a set of atomic propositions $\PP$, the syntax of \ltlf is identical to \LTL, and defined as: $\varphi ::= \ttrue \mid p \mid \neg \varphi \mid \varphi \wedge \varphi \mid \Next \varphi \mid \varphi \Until \varphi$,
where $\ttrue$ denotes the \emph{true} formula, $p \in \PP$ is an atomic proposition, $\neg$ denotes \emph{negation}, $\wedge$ is \emph{conjunction}, $\Next$ is the \emph{strong Next} operator, and $\Until$ is the \emph{Until} operator. We have their corresponding dual operators: $\ffalse$~(\emph{false}) for $\ttrue$, $\vee$~(\emph{disjunction}) for $\wedge$, $\Wnext$~(\emph{weak Next}) for $\Next$, and $\Release$~(\emph{Release}) for $\Until$. Moreover, we use $\G\varphi$ (\emph{Global}) and $\F\varphi$ (\emph{Eventually}) to represent $\ffalse \Release\varphi$ and $\ttrue \Until\varphi$, respectively. 
The length of $\varphi$ is denoted by $|cl(\varphi)|$, where $cl(\varphi)$ is the set of all sub-formulas of $\varphi$.

A \emph{finite non-empty trace} $\rho = \rho[0]\; \rho[1] \cdots \rho[n]\in(2^\PP)^+$ is a sequence of propositional interpretations, where $\rho[i]$ represents the set of propositions that are $\trueVal$ at instant $i$. 
\ltlf formulas are interpreted over finite non-empty traces. Hereafter, we use the term \emph{trace} to refer to a finite non-empty trace for simplicity. Given a trace $\rho$ of length $n+1$, and an instant $0 \leq i \leq n$, we denote by $\rho_i$ the suffix of $\rho$ starting at $i$, i.e., $\rho_i = \rho[i]\; \rho[i+1] \cdots \rho[n]$.
For a trace $\rho$ and an \ltlf formula $\varphi$, we define $\rho$ satisfies $\varphi$, denoted $\rho\models\varphi$, as follows:
\begin{compactitem}
	\item 
	$\rho \models \ttrue$;
	\item 
	$\rho \models p$ iff $p \in \rho[0]$, 
        where $p\in\PP$ is an atomic proposition;
        \item 
        $\rho \models \neg\varphi$ iff $\rho\not\models\varphi$;
	\item 
	$\rho \models \varphi_1 \wedge \varphi_2$ iff $\rho \models \varphi_1$ and $\rho \models \varphi_2$;
	\item 
	$\rho \models \Next \varphi$ iff $|\rho|>1$ and $\rho_1 \models \varphi$;
    \item 
	$\rho \models \varphi_1 \Until \varphi_2$ iff there exists $i$ with $0\leq i < |\rho|$ such that $\rho_i\models \varphi_2$ holds, and for every $j$ with $0 \leq j < i$ it holds that $\rho_j \models \varphi_1$.
\end{compactitem}
The set of traces that satisfy \LTLf formula $\varphi$ is the language of $\varphi$, denoted as $\LL(\varphi)=\{\rho\in(2^\PP)^+\mid \rho\models\varphi\}$.

\subsection{\ltlf Synthesis}

An \LTLf synthesis specification is a tuple $(\varphi,\X, \Y)_t$, where $\varphi$ is an \LTLf formula over propositions in $\X\cup\Y$, with $\X$ being the set of input variables controlled by the environment, $\Y$ being the set of output variables controlled by the agent, and $\X\cap\Y = \emptyset$. The parameter $t\in\curlyBra{Mealy, Moore}$ is the type of target reactive system~(strategy/policy/plan). In reactive systems, interactions happen in turns where both the agent and the environment make moves by assigning values to their respective controlled variables. The order of assignment within each turn determines the system types: if the environment moves first, the system is a Mealy machine; conversely, if the agent moves first, it is a Moore machine. For brevity, we only focus on the problem of synthesizing Moore machines and omit the parameter $t$ hereafter.

\begin{definition}[Winning Strategy]\label{def:winning}
Given an \LTLf synthesis specification $\roundBra{\varphi,\X,\Y}$, 
an agent strategy $g : (2^{\mathcal{X}})^{*} \to 2^{\mathcal{Y}}$ is a \emph{winning strategy} for $\varphi$ iff 
for every infinite sequence $\lambda = X_0 X_1 \cdots \in (2^\X)^\omega$ of propositional interpretations over $\X$, i.e., every possible environment behaviours, there is $k \geq 0$ such that $\rho\models\varphi$ holds, where $\rho=(X_0\cup g(\epsilon)) (X_1\cup g(X_0)) \cdots (X_k\cup g(X_0 \cdots X_{k-1}))$
\end{definition}

\begin{definition}[\LTLf Realizability]\label{def:realizability}
Given an \LTLf synthesis specification $\roundBra{\varphi,\X,\Y}$, 
it is \emph{realizable} iff there exists a winning strategy.
An \LTLf specification is \emph{unrealizable} if it is not realizable.
The \emph{\LTLf realizability problem} is to determine whether an \ltlf specification 
$\roundBra{\varphi,\X,\Y}$ is realizable.
\end{definition}

\begin{definition}[\LTLf Synthesis]\label{def:synthesis}
The \emph{\LTLf synthesis problem} (for a realizable specification) is to compute a winning strategy.
\end{definition}

\subsection{\dfa, \dfa Product, and \dfa Games}

A Deterministic Finite Automaton (\DFA) is described as a 5-tuple $\G=(2^\PP,S,init,\delta,F)$, where
\begin{compactitem}
    \item $2^\PP$ is the alphabet;
    \item $S$ is a finite set of states;
    \item $init\in S$ is the initial state;
    \item $\delta:S\times 2^\PP\to S$ is the transition function;
    \item $F\subseteq S$ is a set of accepting states.
\end{compactitem}

The run $r$ of a trace $\rho=\rho[0]  \rho[1] \cdots \rho[n] \in (2^\PP)^+$ on a \DFA $\G$ is a finite sequence of states $r = s_0 s_1 \cdots s_{n+1}$ such that $s_0=init$ and $\delta(s_i,\rho[i])=s_{i+1}$ for $0\leq i \leq n$.
A trace $\rho$ is accepted by $\G$ iff the corresponding run $r$ ends with an accepting state (i.e., $s_{n+1}\in F$). The set of finite traces accepted by a \DFA $\G$ is the language of $\G$, denoted as $\LL(\G)$. Given any \DFA, we can construct an equivalent \DFA that recognizes the same language and has the minimum number of states. This minimization process can be performed in $\O(|S|\;log\;|S|)$ time using Hopcroft's algorithm~\citep{Hop71}, or in $\O(|S|^2)$ time using Moore's algorithm~\citep{Moo56}.

Given two \DFAs $\G_1=(2^\PP,S_1,init_1,\delta_1,F_1)$ and $\G_2=(2^\PP,S_2,init_2,\delta_2,F_2)$, the product \DFA $\G_1\times\G_2=(2^\PP,S,init,\delta,F)$ such that $\LL(\G_1\times\G_2)=\LL(\G_1)\cap\LL(\G_2)$ is constructed as follows:
\begin{compactitem}

\item $S=S_1\times S_2$ is the set of states;

\item $init = (init_1,init_2)\in S$ is the initial state;

\item $\delta$ is the transition function such that $\delta((s_1,s_2),\sigma) =(\delta_1(s_1,\sigma),\delta_2(s_2,\sigma))$ , where $s_1\in S_1$, $s_2\in S_2$, and $\sigma\in2^\PP$;

\item $F=\{(s_1,s_2)\in S_1\times S_2\mid s_1\in F_1\text{ and }s_2\in F_2\}$ is the set of accepting states.
\end{compactitem}

For every \ltlf formula $\varphi$ over $\mathcal{P}$, there exists a \dfa $\G_\varphi$ that recognizes the same language as $\varphi$, i.e., $\LL(\varphi)=\LL(\G_\varphi)$~\citep{GV13}. There are various approaches to \LTLf-to-DFA construction. In this paper, we leverage the on-the-fly \LTLf-to-DFA construction technique presented in~\citep{GFLVXZ22,XLZSLPV24}. In this approach, every state of the DFA $\G_\varphi$ is represented by an \ltlf formula $\psi$, derived from $\varphi$. The initial state is the formula $\varphi$ itself, and the successor states are generated using $\GetSuccessor(\psi,\sigma)$, where $\sigma \in 2^\mathcal{P}$. Intuitively,  $\GetSuccessor(\psi,\sigma)$ computes the successor of the state $\psi$ via the transition condition $\sigma$. Additionally, we use $\IsAccepting(\psi)$ to check whether $\psi$ is an accepting state. The upper bound of the constructed \DFA size is $\O(2^{2^{|cl(\varphi)|}})$. For more details, we refer to~\citep{GFLVXZ22}.

\begin{lemma}\label{lem:and2product}
Given an \ltlf formula $\varphi = \bigwedge_{1\leq i\leq n}\varphi_i$ and \dfa $\G_1,\cdots,\G_n$ that recognize the language of conjuncts $\varphi_1,\cdots,\varphi_n$, we have $\LL(\varphi)=\LL(\G_1\times\cdots\times\G_n)$.
\end{lemma}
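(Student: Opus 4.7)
The plan is a straightforward two-step reduction that chains together a semantic identity on the formula side with an automata-theoretic identity on the DFA side, gluing them via the hypothesis $\LL(\G_i)=\LL(\varphi_i)$. First I would unfold the language of the conjunction: using the semantic clause $\rho\models\varphi_1\wedge\varphi_2$ iff $\rho\models\varphi_1$ and $\rho\models\varphi_2$, a trivial induction on $n$ yields
\[
\LL\!\Bigl(\bigwedge_{1\leq i\leq n}\varphi_i\Bigr)\;=\;\bigcap_{1\leq i\leq n}\LL(\varphi_i).
\]
No further work is needed here since the \LTLf satisfaction relation is defined pointwise over conjuncts.

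Next I would establish the corresponding identity on the product side, namely $\LL(\G_1\times\cdots\times\G_n)=\bigcap_{i}\LL(\G_i)$. The base case $n=2$ is essentially the defining property of the binary product \dfa given just before the lemma statement: by construction, the run of a trace $\rho$ in $\G_1\times\G_2$ is exactly the pair of component runs in $\G_1$ and $\G_2$, and the accepting set is the intersection of the component accepting sets, so a trace is accepted iff it is accepted by both. I would then push the induction through by treating $\G_1\times\cdots\times\G_n$ as $(\G_1\times\cdots\times\G_{n-1})\times\G_n$ (the product is associative up to isomorphism of state labels), applying the inductive hypothesis to the first factor and the binary case to the outer product.

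Finally, I would combine the two identities with the assumption that each $\G_i$ recognizes $\LL(\varphi_i)$:
\[
\LL(\G_1\times\cdots\times\G_n)\;=\;\bigcap_{i}\LL(\G_i)\;=\;\bigcap_{i}\LL(\varphi_i)\;=\;\LL(\varphi).
\]
I do not anticipate a genuine obstacle: both underlying facts (semantics of $\wedge$ and language of the product \dfa) are standard, and the lemma is essentially a bookkeeping statement. The only mildly delicate point is making the associativity of the $n$-ary product precise — specifically, confirming that however one parenthesizes $\G_1\times\cdots\times\G_n$ the resulting \dfa has the same language — but this follows immediately from the componentwise definitions of $\delta$ and $F$, so a brief remark suffices rather than a separate argument.
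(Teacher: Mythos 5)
Your proof is correct; the paper itself states this lemma without proof, treating it as an immediate consequence of the pointwise semantics of $\wedge$ and the product construction's defining property $\LL(\G_1\times\G_2)=\LL(\G_1)\cap\LL(\G_2)$ given just before the lemma. Your argument --- intersecting languages on both the formula and automaton sides and chaining them via $\LL(\G_i)=\LL(\varphi_i)$, with the brief remark on associativity of the $n$-ary product --- is exactly the standard justification the paper implicitly relies on.
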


A \dfa game~\citep{GV15} is an adversarial reachability game between two players, the agent and the environment, where the \dfa serves as the game arena. The agent and the environment control two disjoint sets of variables $\Y$ and $\X$, respectively. Starting from the initial state, each round consists of both players making moves by assigning values to the variables within their control. The subsequent state is determined following the \DFA transition function, producing a play of the game. A play is the sequence of states produced during the interaction between the agent and the environment. A play terminates when it reaches an accepting state. 
Agent-winning plays terminate in an accepting state, with no preceding states being accepting. And environment-winning plays are infinite plays where none of the states within the plays is accepting. As in \LTLf synthesis, we focus on the \DFA games where the agent moves first.

A play $\tau =s_0 s_1 \cdots$ is consistent with an agent strategy $g: (2^\X)^* \to 2^\Y$ if for every $0 \leq i < |\tau|$~($|\tau| = \infty$ if $\tau$ is infinite), there exists $X_i\in2^\X$ such that $s_{i+1}=\delta(s_i,X_i\cup Y_i)$, where $Y_0 = g(\epsilon)$ and $Y_{i+1} = g(X_0 X_1 \cdots X_i)$. An agent strategy $g$ is a winning strategy, referred to as agent-winning strategy, from state $s$ if there exists no environment-winning plays that are consistent with $g$ starting from $s$. A state $s$ is an agent-winning state if there exists an agent-winning strategy from $s$. A state is environment-winning iff it is not agent-winning. We denote by $\awin(\G)$ the set of agent-winning states in a \dfa game $\G$. For \DFA games, each agent strategy $g$ can also be represented as a positional strategy, as a function $\pi:S\to2^\Y$, which provides the decisions for the agent based on the current state. Conversely, an agent positional strategy $\pi$ induces an agent strategy $g$ as follows: $g(\epsilon) = \pi(s)$ and for every finite trace $\rho$, let $\tau$ be the run of $\G$ on $\rho$ (i.e., starting in state $s$) and define $g(\rho|_{\X}) = \pi(s')$, where $s'$ is the last state in $\tau$. For any \dfa game $\G$, there exists an agent strategy $\pi$ such that $\pi$ is agent-winning for every state $s\in\awin(\G)$. This strategy is called a uniform agent-winning strategy. Hereafter, we consider only uniform agent-winning strategies and omit the term `uniform'. The following theorem establishes the relation between \ltlf synthesis and \dfa game.


\begin{theorem}[\citep{GV15}]\label{thm:pre}
Given an \LTLf synthesis specification $(\varphi,\X,\Y)$ and a \dfa ${\G_{\varphi}}$ such that $\LL(\varphi)=\LL(\G_\varphi)$, $(\varphi,\X,\Y)$ is realizable iff the initial state of $\G_\varphi$ is an agent-winning state in the \dfa game $\G_\varphi$.
\end{theorem}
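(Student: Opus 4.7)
The plan is to prove the biconditional by exhibiting an explicit correspondence between agent strategies $g : (2^\X)^\ast \to 2^\Y$ for the \LTLf specification and agent (positional) strategies $\pi : S \to 2^\Y$ for the \DFA game on $\G_\varphi$, and then showing that winning is preserved in both directions. The whole argument will pivot on the single identity $\LL(\varphi) = \LL(\G_\varphi)$: a finite trace $\rho$ satisfies $\varphi$ exactly when its run in $\G_\varphi$ ends in an accepting state.

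For the forward direction ($\Rightarrow$), I would assume $(\varphi,\X,\Y)$ is realizable and fix a winning strategy $g$ in the sense of Definition~\ref{def:winning}. I would then view $g$ directly as a strategy in the \DFA game on $\G_\varphi$ (history-dependent strategies are clearly admissible; positional determinacy of reachability games on finite arenas, already invoked in the preliminaries, will afterwards yield a positional winning strategy from $init$). To show $g$ is agent-winning from $init$, I would argue by contradiction: any environment-winning play consistent with $g$ is an infinite sequence $s_0 s_1 \cdots$ with $s_0 = init$ and no $s_i \in F$, induced by some $\lambda = X_0 X_1 \cdots \in (2^\X)^\omega$ together with $Y_i = g(X_0 \cdots X_{i-1})$. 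By Definition~\ref{def:winning} there exists $k \geq 0$ with $\rho \models \varphi$, where $\rho = (X_0 \cup Y_0)\cdots(X_k \cup Y_k)$. Since $\LL(\varphi) = \LL(\G_\varphi)$, the run of $\rho$ on $\G_\varphi$ must end in $F$, i.e.\ $s_{k+1} \in F$, contradicting that the play is environment-winning.

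For the backward direction ($\Leftarrow$), I would assume $init \in \awin(\G_\varphi)$ and take a uniform positional agent-winning strategy $\pi$, whose existence is guaranteed by the preliminaries. Using the canonical lift of $\pi$ to a function $g : (2^\X)^\ast \to 2^\Y$ described there (apply $\pi$ to the state reached by running the input history through $\delta$), I would show $g$ is a winning strategy for $\varphi$. For an arbitrary $\lambda = X_0 X_1 \cdots \in (2^\X)^\omega$, the infinite play generated in $\G_\varphi$ from $init$ is consistent with $\pi$, so by agent-winningness it must reach an accepting state at some first index $k+1$. Reading off the consumed letters gives the finite trace $\rho = (X_0 \cup g(\epsilon))\cdots(X_k \cup g(X_0\cdots X_{k-1}))$ whose run in $\G_\varphi$ ends in $F$; hence $\rho \in \LL(\G_\varphi) = \LL(\varphi)$, i.e.\ $\rho \models \varphi$, establishing Definition~\ref{def:winning} for $g$.

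I do not expect a hard step here: both directions are essentially unpackings of definitions once the language equivalence $\LL(\varphi) = \LL(\G_\varphi)$ is used to translate ``trace satisfies $\varphi$'' into ``run hits $F$''. The only mild subtlety is bookkeeping the Moore-machine turn order (agent moves first) so that the alignment between the $i$-th letter $X_i \cup Y_i$ of $\rho$, the $i$-th transition in the play, and the history argument $X_0 \cdots X_{i-1}$ of $g$ is consistent; I would fix this indexing convention once at the outset and reuse it in both directions.
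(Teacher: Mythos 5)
The paper states Theorem~\ref{thm:pre} as a result imported from \citep{GV15} and gives no proof of its own, so there is no in-paper argument to compare against; your proposal is the standard proof and is correct. Both directions properly exploit $\LL(\varphi)=\LL(\G_\varphi)$ together with the paper's definitions (a winning strategy forces some prefix $\rho$ with $\rho\models\varphi$, whose run must end in $F$, contradicting an infinite non-accepting play; conversely a positional agent-winning strategy, lifted to $g:(2^\X)^*\to 2^\Y$, forces every play to reach $F$ at some first index $k+1$, yielding the required satisfying prefix), and your indexing of the run $s_0\cdots s_{k+1}$ against the trace $\rho[0]\cdots\rho[k]$ matches the paper's conventions.
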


\section{Theoretical Foundations}\label{sec:foundation}
We present in this section the theoretical foundations of our compositional forward \ltlf synthesis framework
. The crux of our compositional \LTLf synthesis approach is to bypass the direct synthesis of the complete \LTLf formula $\varphi = \bigwedge_{1\leq i\leq n}\varphi_i$. Instead, following the typical divide-and-conquer compositional principle, we synthesize each conjunct $\varphi_i$ before composing the results. This method allows us to utilize a structured synthesis procedure that leverages intermediate results to simplify subsequent composition steps, thereby diminishing the overall synthesis difficulty.

For an \LTLf specification $(\varphi,\X,\Y)$ with $\varphi = \bigwedge_{1\leq i\leq n}\varphi_i$, it is straightforward to decompose $\varphi$ into conjuncts $\varphi_1, \cdots, \varphi_n$. The real challenge lies in composing the results of reasoning on the sub-specifications $(\varphi_i,\X,\Y)$ to derive the synthesis result for $(\varphi,\X,\Y)$. To this end, we begin by considering a simple case: if any sub-specification $(\varphi_i,\X,\Y)$ is found to be unrealizable, we can directly conclude that the original specification $(\varphi,\X,\Y)$ is unrealizable.

\begin{theorem}\label{thm:sub-unrea}
The \ltlf specification $(\bigwedge_{1\leq i\leq n}\varphi_i,\X,\Y)$ is unrealizable if there exists $1\leq i\leq n$ such that $(\varphi_i,\X,\Y)$ is unrealizable.
\end{theorem}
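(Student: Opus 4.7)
The plan is to prove the statement by contraposition. That is, rather than showing directly that unrealizability of some $(\varphi_i,\X,\Y)$ forces unrealizability of the conjunction, I will show that realizability of $(\bigwedge_{1\leq i\leq n}\varphi_i,\X,\Y)$ forces realizability of every sub-specification $(\varphi_i,\X,\Y)$. The intuition is that a winning strategy for a stronger specification should automatically be a winning strategy for any of its logical consequences, since the agent is making the same moves in response to the same environment behaviour and the resulting trace, which satisfies the conjunction, must satisfy each conjunct.

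First I would assume $(\varphi,\X,\Y)$ with $\varphi = \bigwedge_{1\leq i\leq n}\varphi_i$ is realizable, so by Definition~\ref{def:realizability} there exists a winning agent strategy $g : (2^{\X})^{*} \to 2^{\Y}$. Fix an arbitrary index $i$ with $1 \leq i \leq n$. I would then show that this very same $g$ witnesses realizability of $(\varphi_i,\X,\Y)$. To that end, take any infinite environment behaviour $\lambda = X_0 X_1 \cdots \in (2^\X)^\omega$. By Definition~\ref{def:winning} applied to $g$ and $\varphi$, there exists some $k \geq 0$ such that the finite trace $\rho = (X_0\cup g(\epsilon))(X_1\cup g(X_0))\cdots(X_k\cup g(X_0\cdots X_{k-1}))$ satisfies $\rho \models \varphi$.

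Next, invoking the semantics of conjunction in \ltlf, namely $\rho \models \varphi_1 \wedge \varphi_2$ iff $\rho \models \varphi_1$ and $\rho \models \varphi_2$ (iterated to $n$ conjuncts), from $\rho \models \bigwedge_{1\leq j\leq n}\varphi_j$ I obtain $\rho \models \varphi_i$ for the chosen $i$. Since $\lambda$ was arbitrary, this establishes that $g$ is a winning strategy for $(\varphi_i,\X,\Y)$, so the sub-specification is realizable. Taking the contrapositive yields the claim of the theorem.

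I do not anticipate any significant obstacle: the argument is essentially a one-line semantic observation, and no case analysis, induction on formula structure, or automata-theoretic reasoning is needed. The only care point is to use the \emph{same} $k$ from the definition of winning for $\varphi$ when establishing the winning condition for $\varphi_i$, which is unproblematic because $\rho \models \varphi$ already entails $\rho \models \varphi_i$ at that same $k$.
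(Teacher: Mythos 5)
Your argument is correct and is essentially the paper's own proof: the paper argues by contradiction (assume the conjunction is realizable, derive that every conjunct is realizable), which is the same core step as your contraposition, namely that a winning strategy for $\bigwedge_i \varphi_i$ yields, via the semantics of conjunction at the same witness $k$, a winning strategy for each $\varphi_i$. No substantive difference.
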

\begin{proof}
We prove the theorem by contradiction. Assume that $(\bigwedge_{1\leq i\leq n}\varphi_i,\X,\Y)$ is realizable. By Definition~\ref{def:realizability}, there exists a winning strategy $g: (2^\X)^* \to 2^\Y$ such that for an arbitrary infinite sequence $\lambda = X_0 X_1 \cdots \in (2^\X)^\omega$, there is $k \geq 0$ such that $\rho\models\bigwedge_{1\leq i\leq n}\varphi_i$ holds, where $\rho=(X_0\cup g(\epsilon)) (X_1\cup g(X_0)) \cdots (X_k\cup g(X_0,\cdots,X_{k-1}))$. Then we have $\rho\models\varphi_i$ for $1\leq i\leq n$, which indicates that $(\varphi_i,\X,\Y)$ is realizable for $1\leq i\leq n$. This contradicts the condition that there exists $1\leq i\leq n$ such that $(\varphi_i,\X,\Y)$ is unrealizable. 
\end{proof}

The synthesis problem becomes more challenging when all sub-specifications are realizable, making it necessary to apply appropriate composition operations. Different from previous works where the composition is performed during \DFA construction, our approach executes the composition operation when each sub-specification has been determined to be realizable and a corresponding strategy is synthesized. Our compositional method performs the composition on the strategies associated with each sub-specification, rather than on the complete \DFAs of sub-specifications.

To formulate the composition of strategies, we utilize the \emph{agent-winning region} in \DFA games. Intuitively, the agent-winning region, consisting of all agent-winning states, captures all possible ways the agent can win the game, regardless of how the environment behaves. Consequently, it represents all the information required from each sub-specification to realize the original specification. Notably, the environment-winning states, which contribute no useful information for the agent, are excluded from these regions and merged into a single state, hence leading to a reduced state space. Therefore, later in the composition procedure, we only need to make use of these reduced \DFAs instead of the complete \DFAs of the sub-specifications, improving the efficiency of subsequent synthesis steps.

\begin{definition}[Agent-Winning Region]\label{def:aequiv}
Given a \dfa game on $\G=(2^{\X\cup\Y},init,S,\delta,F)$ with the set of agent-winning states $\awin(\G)\subseteq S$, the corresponding agent-winning region is represented by the \dfa 
\begin{equation}\label{eq:awr}
    \awr(\G)=(2^{\X\cup\Y},init,\awin(\G)\cup\{ew\},\delta',F)\text{ ,}
\end{equation}
where $ew$ is a special state representing the set of environment-winning states in $\G$. The transition function $\delta'$ is defined as follows:
\begin{flalign}\label{eq:awr-delta}
    &\delta'(s,X\cup Y) = \nonumber&\\
    &\quad\begin{cases}
        \delta(s,X\cup Y)&\parbox[t]{4.7cm}{if $s\neq ew$ and for every $X'\in2^\X$, $\delta(s,X'\cup Y)\in \awin(\G)$;}\\
        ew&\text{otherwise.}
    \end{cases}&
\end{flalign}
\end{definition}

We next show that $\awr(\G)$ does not exclude any information that the agent needs to win the original game $\G$.

\begin{definition}[Agent-Equivalent \DFA Games]
Let $\G$ be a \DFA game and $\G'$ a pruning of $\G$. $\G$ and $\G'$ are agent-equivalent (denoted $\G\aequiv\G'$) iff every agent-winning strategy in the \DFA game $\G$ is an agent-winning strategy in the \DFA game $\G'$, and vice-versa.
\end{definition}

\begin{lemma}\label{lem:win-equiv}
    Given a \DFA game $\G$ and its agent-winning region $\awr(\G)$, $\G$ and $\awr(\G)$ are agent-equivalent.
\end{lemma}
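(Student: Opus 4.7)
I would prove the two directions of agent-equivalence separately, in each case showing that along plays consistent with an agent-winning strategy the transition functions $\delta$ and $\delta'$ must coincide, so the same play is produced in $\G$ and in $\awr(\G)$. Two structural observations about $\awr(\G)$ drive the argument, both read off from Equation~\ref{eq:awr-delta}: the state $ew$ is a non-accepting sink (since $\delta'(ew, X\cup Y)=ew$ for every $X,Y$, so any play visiting $ew$ is lost by the agent), and every $s\in\awin(\G)$ remains an agent-winning state of the reduced game (as will follow from the forward step applied to any uniform winning strategy of $\G$).

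For the forward inclusion, let $\pi$ be a uniform agent-winning strategy of $\G$. The defining property of $\awin(\G)$ together with uniformity guarantees that from every $s\in\awin(\G)$, every environment response $X$ to $\pi(s)$ lands back in $\awin(\G)$. By the first case of Equation~\ref{eq:awr-delta} this yields $\delta'(s,X\cup\pi(s))=\delta(s,X\cup\pi(s))$, so a short induction on play length shows that any $\awr(\G)$-play consistent with $\pi$ starting in $s$ is identical to the corresponding $\G$-play and therefore reaches $F$.

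For the backward inclusion, let $\pi'$ be a uniform agent-winning strategy of $\awr(\G)$ and fix $s\in\awin(\G)$. If for some $X$ the transition $\delta'(s,X\cup\pi'(s))$ equalled $ew$, the resulting play would remain in the $ew$-sink forever and be losing, contradicting $\pi'$ being winning from $s$. Hence the second case of Equation~\ref{eq:awr-delta} never fires along plays consistent with $\pi'$ from $s$, $\delta$ and $\delta'$ once again agree step by step, and the play produced in $\G$ matches that in $\awr(\G)$ and terminates in $F$, making $\pi'$ winning in $\G$ as well.

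\emph{Main obstacle.} The only subtle point I anticipate is the domain mismatch: strategies in $\G$ are total on $S$, whereas strategies in $\awr(\G)$ are defined on $\awin(\G)\cup\{ew\}$. I would handle this by restricting or extending strategies arbitrarily on states outside $\awin(\G)$, which is harmless because uniformity makes the winning property depend only on the values taken on agent-winning states.
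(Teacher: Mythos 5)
Your proof is correct and follows essentially the same route as the paper's: both directions hinge on the invariant that plays consistent with an agent-winning strategy of $\G$ never leave $\awin(\G)$ (so $\delta$ and $\delta'$ coincide along them) and on $ew$ being a non-accepting sink that any winning strategy of $\awr(\G)$ must avoid. The only cosmetic difference is that the paper argues the second direction by contrapositive (a non-winning strategy of $\G$ remains non-winning in $\awr(\G)$) while you argue it directly, which is logically equivalent.
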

\begin{proof}
    We begin by showing that every agent-winning strategy in the \DFA game $\G=(2^{\X\cup\Y},S,\delta,F)$ is also an agent-winning strategy in the reduced \DFA game $\awr(\G)=(2^{\X\cup\Y},\awin(\G)\cup\{ew\},\delta',F)$. 

    
    Let $g: (2^\X)^\star \rightarrow 2^\Y$ be a strategy. Let $\tau_g$ be a play induced by $g$ on the game $\G$, such that for every $0 \leq i < |\tau_g|$, we have that $s_{i+1}=\delta(s_i,X_i\cup Y_i)$ holds for some $X_i\in2^\X$, where $Y_0 = g(\epsilon)$ and $Y_{i+1} = g(X_0 X_1 \cdots X_i)$. We will show that when $g$ is an agent-winning strategy, no plays induced by $g$ on $\G$ visits a state $s \notin \awin(\G)$. 
    
    By means of contradiction, suppose $g$ is an agent-winning strategy such that there exists a play $\tau_g$ visits $s_i \notin \awin(\G)$ for some $i \geq 0$. Since \DFA games are determined games and both players have uniform winning strategies, the environment can begin executing an environment-winning strategy from $s_i \notin \awin(\G)$. Then, by definition of environment-winning strategies, we have that every resulting play will never visit $F$, i.e., is an environment-winning play. Thus, we have a contradiction. 
    
    By construction, i.e., Equation~(\ref{eq:awr}), since $\awr(\G)$ is defined over the winning region of $\G$, every agent-winning strategy $g$ in the \DFA game $\G$ can be executed in $\awr(\G)$. Therefore, the plays produced by $g$ on $\G$ are the same as those produced by $g$ on $\awr(\G)$. In other words, all the plays produced by $g$ on $\awr(\G)$ are agent-winning plays. Thus, $g$ is also an agent-winning strategy on $\awr(\G)$.
      
    We now show that a strategy $g$ that is not agent-winning in $\G$ is not agent-winning in $\awr(\G)$. If $g$ is not agent-winning, there exists a play $\tau_g$ produced by $g$ that is an environment-winning play. Therefore, either $\tau_g$ stays within $\awin(\G)$ but never visits $F$ or $\tau_g$ visits a state $s_i \not\in \awin(\G)$ for some $i \geq 0$. In the former case, Equation~(\ref{eq:awr}) assures that $\tau_g$ also exists when playing $g$ on $\awr(\G)$ such that $g$ is not an agent-winning strategy on $\awr(\G)$. In the latter case, there exists a new play $\tau'_g$ that shares the same prefix $h = s_0 s_1 \cdots s_{i-1}$ before $\tau_g$ visits $s_i \not\in \awin(\G)$ ($i$ indicates the first occurrence of such state), and $\tau'_g = h \cdot (ew)^\omega$~(note that $ew$ represents all the environment-winning states). $\tau'_g$ is indeed an environment-winning play, hence $g$ is not an agent-winning strategy on $\awr(\G)$.
\end{proof}

The agent-equivalent relation is preserved under \DFA product and minimization.
\begin{lemma}\label{lem:euiv-product}
Let $\G_1$, $\G_1'$, $\G_2$, and $\G_2'$ be \DFA games such that $\G_1\aequiv\G_1'$ and $\G_2\aequiv\G_2'$. We have $\G_1\times \G_2\aequiv \G_1'\times \G_2'$. 
\end{lemma}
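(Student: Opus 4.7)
The plan is to prove the two inclusions defining $\aequiv$ by reducing to a one-sided pruning step. Since $\aequiv$ is symmetric and transitive (being defined by equality of agent-winning strategy sets across $\G$ and $\G'$), it suffices to establish the auxiliary statement: \emph{if $\G_1 \aequiv \G_1'$, then $\G_1 \times \G_2 \aequiv \G_1' \times \G_2$.} Applying this once to prune $\G_1$, once again (with the roles of the two factors swapped) to prune $\G_2$, and then chaining by transitivity yields the lemma.

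To prove the auxiliary statement, fix an arbitrary strategy $g$ and environment behaviour $\lambda \in (2^\X)^\omega$. The structural observation I will exploit is that the product transition function is componentwise, so the play $\tau$ of $g$ in $\G_1 \times \G_2$ under $\lambda$ decomposes as $\tau[k]=(\tau_1[k],\tau_2[k])$, where $\tau_1$ and $\tau_2$ are the plays of $g$ under the same $\lambda$ in $\G_1$ and $\G_2$ respectively, and by definition of $F$ in the product, $\tau[k]\in F$ iff $\tau_1[k]\in F_1$ and $\tau_2[k]\in F_2$ \emph{simultaneously}. For the ``only if'' direction, $g$ being agent-winning in $\G_1 \times \G_2$ then immediately forces each component play $\tau_1$ to reach $F_1$, so $g$ is agent-winning in $\G_1$ alone, and by $\G_1 \aequiv \G_1'$ also in $\G_1'$.

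The main obstacle is that ``agent-winning in $\G_1'$ and in $\G_2$ individually'' does not a priori guarantee the simultaneous acceptance required in $\G_1' \times \G_2$. To close this gap I will show that the plays $\tau_1$ and $\tau_1'$ of $g$ under the same $\lambda$ on $\G_1$ and $\G_1'$ coincide step by step. The argument mirrors the one used in the proof of Lemma~\ref{lem:win-equiv}: every play of an agent-winning strategy must remain within the agent-winning region (otherwise, from the first state outside it, the environment could execute an environment-winning strategy, contradicting $g$'s winning property), and pruning preserves all transitions originating from such agent-winning states along the ``safe'' agent choices that $g$ actually plays---this is exactly the content of Equation~(\ref{eq:awr}) for the canonical pruning $\awr$. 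Consequently $\tau_1=\tau_1'$, so the step $k$ of simultaneous acceptance witnessed in $\G_1 \times \G_2$ is also a step of simultaneous acceptance in $\G_1' \times \G_2$, i.e.\ $g$ is agent-winning there. The ``if'' direction is symmetric: starting from $g$ agent-winning in $\G_1' \times \G_2$, the same projection-plus-coincidence-of-plays argument (now with $\G_1$ and $\G_1'$ swapped) transfers simultaneous acceptance back to $\G_1 \times \G_2$. Repeating the whole reasoning for the $\G_2$ component then completes the proof.
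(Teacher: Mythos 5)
Your proof is correct, but it takes a noticeably different and more careful route than the paper's. The paper's proof is essentially one step: it asserts that $g$ is agent-winning for $\G_1\times\G_2$ iff $g$ is agent-winning for both $\G_1$ and $\G_2$ (justified by $\LL(\G_1\times\G_2)=\LL(\G_1)\cap\LL(\G_2)$), and then applies the factor-wise equivalences. You instead reduce to a one-sided pruning statement chained by transitivity, and---crucially---you confront the simultaneity problem head-on: winning each factor separately only guarantees that each component play reaches its accepting set at \emph{some} step, not that there is a \emph{common} step at which the product state lies in $F_1\times F_2$. The paper's ``clearly'' glosses over exactly this point; what actually makes the lemma true is that $\G'$ is not an arbitrary game with the same winning strategies but a \emph{pruning} ($\awr$ or a minimization), which preserves the acceptance behaviour of plays induced by winning strategies. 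Your play-coincidence argument supplies that missing ingredient, so your proof is arguably more complete where it matters. Two small caveats: first, for the minimization case the literal claim $\tau_1=\tau_1'$ is too strong (states are merged/renamed), but the weaker and sufficient fact that the two runs are accepting at exactly the same positions follows from $\LL(\G)=\LL(\G_m)$; second, your argument, like the lemma itself, genuinely depends on $\G'$ being one of the concrete prunings used in the paper rather than on the bare definition of $\aequiv$ as equality of winning-strategy sets---it is worth making that restriction explicit, since the statement is false for an arbitrary game that merely shares the same winning strategies.
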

\begin{proof}
    Clearly, $g$ is an agent-winning strategy for $\G_1\times \G_2$ iff $g$ is an agent-winning strategy for both $\G_1$ and $\G_2$, as $\LL(\G_1\times \G_2) = \LL(\G_1) \cap \LL(\G_2)$ . Then by Lemma~\ref{lem:win-equiv}, we have that $g$ is an agent-winning strategy for both $\G'_1$ and $\G'_2$. Therefore, $g$ is an agent-winning strategy for $\G_1'\times \G_2'$.
\end{proof}

\begin{lemma}\label{lem:euiv-minimize}
Let $\G$ and $\G_m$ be \DFA games such that $\G_m$ is the minimal \DFA of $\G$. We have $\G \aequiv \G_m$. 
\end{lemma}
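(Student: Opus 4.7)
The plan is to reduce the agent-equivalence to language equivalence. The key observation is that DFA minimization is language-preserving, i.e., $\LL(\G) = \LL(\G_m)$, and the winning condition for the agent in a DFA game depends only on the traces produced by plays, not on the particular states traversed.

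First I would adopt the trace-based view of strategies, $g: (2^\X)^\ast \to 2^\Y$, rather than the positional view $\pi: S \to 2^\Y$, since the state spaces of $\G$ and $\G_m$ differ and a positional strategy on one cannot be directly applied on the other. Given any environment sequence $\lambda = X_0 X_1 \cdots \in (2^\X)^\omega$ and strategy $g$, the induced infinite sequence of labels $\sigma_g^\lambda = (X_0 \cup g(\epsilon))(X_1 \cup g(X_0)) \cdots \in (2^{\X \cup \Y})^\omega$ is fixed independently of the arena, and since both $\G$ and $\G_m$ are deterministic, each of them has a unique run on any prefix of $\sigma_g^\lambda$.

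Next I would unfold the definition of agent-winning strategy: $g$ is agent-winning in $\G$ iff for every $\lambda$ there exists $k \geq 0$ such that the prefix $\sigma_g^\lambda[0..k]$ of length $k+1$ is accepted by $\G$, i.e., lies in $\LL(\G)$. Because $\LL(\G) = \LL(\G_m)$, this condition is logically equivalent to: for every $\lambda$ there exists $k \geq 0$ such that $\sigma_g^\lambda[0..k] \in \LL(\G_m)$, which is precisely the statement that $g$ is agent-winning in $\G_m$. This establishes both directions of the biconditional simultaneously.

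The step that most needs care is just formulating the definition of agent-equivalence for games whose state spaces differ (the definition in the paper is phrased in terms of "pruning", which literally applies to $\awr(\G)$ but needs a mild reinterpretation for minimization). Once we commit to comparing strategies in their trace-based form, the rest is a routine appeal to language preservation. I do not expect a real technical obstacle here; the content of the lemma is essentially that the game-theoretic notion of winning factors through the language of the arena, and once this is made explicit, Lemma~\ref{lem:euiv-minimize} follows directly from the correctness of Hopcroft's or Moore's minimization algorithm.
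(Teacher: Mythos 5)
Your proposal is correct and takes essentially the same route as the paper's proof: both reduce the agent-winning condition to membership of trace prefixes in the language of the arena and then invoke $\LL(\G)=\LL(\G_m)$. Your version is somewhat more careful in insisting on the trace-based form $g:(2^\X)^*\to 2^\Y$ of strategies (needed because the state spaces of $\G$ and $\G_m$ differ), a point the paper's terser argument leaves implicit.
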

\begin{proof}
$g$ is an agent-winning strategy for $\G$ iff every play induced by $g$ terminates at an accepting state of $\G$, hence the corresponding traces are accepted by $\D$ and consequently by $\G_m$ as $\LL(\G) = \LL(\G_m)$. Therefore, $g$ is also an agent-winning strategy for $\G_m$.
\end{proof}

At this point, we can derive the theorem for solving the \ltlf synthesis problem by compositions on agent-winning regions w.r.t. sub-specifications.

\begin{theorem}\label{thm:main}
For an \ltlf specification $(\bigwedge_{1\leq i\leq n}\varphi_i,\X,\Y)$, where  $(\varphi_i,\X,\Y)$ is realizable for all $1\leq i\leq n$, and let $\G_i$ be \dfa of $\varphi_i$, the complete specification $(\bigwedge_{1\leq i\leq n}\varphi_i,\X,\Y)$ is realizable iff the initial state of $\awr(\G_1\times\cdots\times\G_n)$ is an agent-winning state.
\end{theorem}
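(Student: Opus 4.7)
The plan is to chain together three prior results, tracking how the initial state behaves under product and under the pruning in Equation~(\ref{eq:awr}); no induction on $n$ is needed. First, by Lemma~\ref{lem:and2product}, the product $\G := \G_1 \times \cdots \times \G_n$ recognizes exactly $\LL(\bigwedge_{1\leq i\leq n}\varphi_i)$. Applying Theorem~\ref{thm:pre} to the formula $\bigwedge_{1\leq i\leq n}\varphi_i$ and the recognizing \DFA $\G$, the specification $(\bigwedge_{1\leq i\leq n}\varphi_i, \X, \Y)$ is realizable iff the initial state of $\G$ is an agent-winning state in the \DFA game $\G$.

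Next, I would invoke Lemma~\ref{lem:win-equiv} on $\G$ to obtain $\G \aequiv \awr(\G)$. By Equation~(\ref{eq:awr}), $\awr(\G)$ is built on the winning region of $\G$ augmented with the sink $ew$ and, crucially, keeps the same initial state as $\G$. From the definition of $\aequiv$, an agent strategy is agent-winning in $\G$ iff it is agent-winning in $\awr(\G)$; unpacking this, the initial state of $\G$ admits an agent-winning strategy iff the initial state of $\awr(\G)$ does, so the two initial states are agent-winning simultaneously. Combining this equivalence with the one obtained from Theorem~\ref{thm:pre} yields the desired biconditional with $\awr(\G_1\times\cdots\times\G_n)$.

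The main point requiring care is bridging the definition of $\aequiv$, which is phrased in terms of strategies, with the claim about "the initial state being agent-winning". This is immediate once one notes that both games share the same initial state and that "agent-winning strategy" is by convention a strategy under which no environment-winning play originates from that state. The hypothesis that every $(\varphi_i, \X, \Y)$ is realizable plays no role in the biconditional itself; it merely delineates the useful case left open by Theorem~\ref{thm:sub-unrea}, and I would mention this observation explicitly so that the reader sees why the decomposition is well-posed even though the lemma is slightly stronger than advertised.
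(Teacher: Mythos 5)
Your proof is correct and follows essentially the same route as the paper's: chaining Lemma~\ref{lem:and2product}, Theorem~\ref{thm:pre}, and Lemma~\ref{lem:win-equiv} into a sequence of biconditionals about the initial state. Your explicit remark on bridging the strategy-level definition of $\aequiv$ with the state-level claim, and your observation that the realizability hypothesis on each $(\varphi_i,\X,\Y)$ is not actually needed for the biconditional, are both sound refinements of what the paper leaves implicit.
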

\begin{proof}
By Lemma~\ref{lem:win-equiv}, we have that $\G_1\times\cdots\times\G_n$ and $\awr(\G_1\times\cdots\times\G_n)$ are agent-equivalent. Then we have the following:

$(\bigwedge_{1\leq i\leq n}\varphi_i,\X,\Y)$ is realizable.
$\xLeftrightarrow{\text{Theorem~\ref{thm:pre}}}$
The initial state of $\G_\varphi$ is an agent-winning state with $\varphi=\bigwedge_{1\leq i\leq n}\varphi_i$.
$\xLeftrightarrow{\text{Lemma~\ref{lem:and2product}}}$
The initial state of $\G_1\times\cdots\times\G_n$ is an agent-winning state.
$\xLeftrightarrow{\text{Lemmas~\ref{lem:win-equiv}\&\ref{lem:euiv-product}\&\ref{lem:euiv-minimize}}}$
The initial state of $\awr(\G_1\times\cdots\times\G_n)$ is agent-winning.
\end{proof}

\section{Compositional Techniques}\label{sec:imple}
In this section, we introduce in detail our compositional on-the-fly \LTLf synthesis technique and its implementation.

\subsection{Main Framework}
The main idea of the compositional synthesis framework lies in decomposing a given \LTLf specification into smaller sub-specifications, processing them and ultimately composing the intermediate results. Algorithm~\ref{alg:main} outlines the main structure of this approach. It takes an \ltlf specification $(\varphi_1 \wedge \cdots \wedge \varphi_n,\X,\Y)$ as input and returns either a non-empty strategy if realizable or an empty set if unrealizable. The algorithm starts with a fast unrealizability check (Line~\ref{line:main:unreaCheck}) by checking whether any sub-specification is unrealizable. This is done by synthesizing separately each sub-specification using standard on-the-fly synthesis techniques. If an unrealizable sub-specification is found, the entire specification concludes to be unrealizable directly~(Theorem~\ref{thm:sub-unrea}).

If the fast unrealizablity check does not yield a result, the algorithm proceeds to the main composition procedure, which iteratively processes each sub-specification in the second \textit{for}-loop. As detailed in Section~\ref{sec:foundation}, the composition of sub-specifications requires combining only the agent-winning regions of the \DFA games corresponding to these specifications. Therefore, a \DFA, $awr\_\G$, is initialized before entering the loop to track the composed agent-winning regions of all processed sub-specifications. Initially, $awr\_\G$ is set to the \dfa $\G_{\ttrue}$, which accepts any arbitrary trace. Within the loop, each iteration processes a new sub-specification and composes it with the previously processed ones. Specifically, the $i$-th~($1 \leq i \leq n$) iteration corresponds to processing the sub-specification $\varphi_i$ and composing it with $(\varphi_1\wedge\cdots\wedge\varphi_{i-1},\X,\Y)$. At Line~\ref{line:main:compose}, the function $\Compose()$ performs the composition and updates $awr\_\G$ as $\awr(\G_{\varphi_1}\times\cdots\times\G_{\varphi_i})$. Specifically, the composition operation implemented in the function $\Compose()$ at Line~\ref{line:main:compose} satisfies the following:
\begin{flalign}\label{eq:compose}
  &\Compose(\awr(\G_{\psi_j}),(\psi_k,\X,\Y)) 
  =\nonumber&\\
  &\begin{cases}
      \awr(\G_{\psi_j}\times\G_{\psi_k})&\text{if $(\psi_j\wedge\psi_k,\X,\Y)$ is realizable;}\\
      \text{Null}&\text{otherwise.}
  \end{cases}&
\end{flalign}

During each iteration, if $\Compose()$ returns `Null', indicating that $(\varphi_1\wedge\cdots\wedge\varphi_i,\X,\Y)$ is unrealizable, the original specification $(\varphi, \X, \Y)$ is concluded to be unrealizable~(Theorem~\ref{thm:sub-unrea}). Once the loop completes and all intermediate results are composed, the algorithm determines that $(\varphi, \X, \Y)$ is realizable, and returns an agent strategy. This agent strategy is built from the agent-winning region $awr\_\G$, which is intuitive and encapsulated into an API $\BuildStrategy()$. In fact, during the final (i.e., $n$-th) iteration, no further composition operations will be performed. Therefore, it is not required to compute the agent-winning region. Instead, in the final composition step, we only need to check the realizability and derive an agent-winning strategy. For conciseness, this is described here but is not explicitly written out in the algorithm.

\begin{algorithm}[t]
\LinesNumbered
\DontPrintSemicolon
\caption{Compositional Synthesis}
\label{alg:main}
\KwIn{An \ltlf specification $(\varphi_1 \wedge \cdots \wedge \varphi_n,\X,\Y)$}
\KwOut{Agent strategy if the specification is realizable;\\\qquad\quad~~~$\emptyset$ otherwise.} 
\For{$i=1\cdots n$}
{\label{line:main:for1begin}
  \If{$(\varphi_i,\X,\Y)$ \textup{is unrealizable}}
  {\label{line:main:unreaCheck}
    \KwRet $\emptyset$\;
  }
}\label{line:main:for1end}
$awr\_\G\coloneqq\G_\ttrue$\quad\textcolor{blue}{$\rhd$ $\G_{\ttrue}$ is a \dfa such that $\LL(\G_{\ttrue})=\LL(\ttrue)$.}\;\label{line:main:for2begin}
\For{$i=1\cdots n$}
{
  $awr\_\G\coloneqq \Compose(awr\_\G,(\varphi_i,\X,\Y))$\;\label{line:main:compose}
  \If{$awr\_\G = \,$\textup{Null}}
  {
    \KwRet $\emptyset$\;
  }
    \textcolor{blue}{$\rhd$ Loop Invariant: $awr\_\G = \awr(\G_{\varphi_1}\times\cdots\times\G_{\varphi_i})$.}\;
}\label{line:main:for2end}
\KwRet $\BuildStrategy(awr\_\G)$\;
\end{algorithm}

We now introduce two composition variants. The first variant, \emph{individual composition}, focuses on fully leveraging minimization to reduce the state space during composition. The second variant, \emph{incremental composition}, aims to incorporate on-the-fly synthesis to guide the composition process.

\subsection{Individual Composition} Intuitively, this approach solves each sub-specification independently, unaffected by the composition. Specifically, this variant first computes the agent-winning region of each new sub-specification as a DFA and minimizes it to reduce the state space, and then integrates it into the composition. As outlined in Algorithm~\ref{alg:individual}, the procedure first computes the agent-winning region $\awr(\G_{\psi_2})$ of the sub-specification $(\psi_2,\X,\Y)$ to be composed. It then performs a \dfa product to compute $\awr(\G_{\psi_1})\times\awr(\G_{\psi_2})$ (Line \ref{line:individual:product}), and solves the resulting \dfa game (Line \ref{line:individual:solveGame}). If the initial state of $\awr(\G_{\psi_1})\times\awr(\G_{\psi_2})$ is agent-winning, the algorithm returns $\awr(\G_{\psi_1}\times\G_{\psi_2})$; otherwise, it returns `Null'. Note that whenever an agent-winning region is constructed, it is minimized to reduce the state space of subsequent steps.


The computation of $\awr(\G_{\psi_2})$ is implemented using the API $\GetAwr()$ (Line \ref{line:individual:getAwr}), which adapts existing \LTLf synthesis approaches. For a given \DFA game, the key to building the agent-winning region is computing the set of agent-winning states. When applying the on-the-fly synthesis approach to compute the agent-winning region, the forward search must not terminate as soon as the current state is identified as agent-winning. Instead, it should explore all possible agent choices and continue the search recursively. 

\begin{lemma}\label{lem:algo-individual}
The implementation of $\Compose()$ in Algorithm~\ref{alg:individual} satisfies Equation~(\ref{eq:compose}).
\end{lemma}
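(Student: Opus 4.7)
The plan is to verify that Algorithm~\ref{alg:individual} realises both branches of Equation~(\ref{eq:compose}): it returns Null exactly when $(\psi_j\wedge\psi_k,\X,\Y)$ is unrealizable, and otherwise yields a DFA that agrees with $\awr(\G_{\psi_j}\times\G_{\psi_k})$ in the sense required by downstream composition, namely agent-equivalence, since Algorithm~\ref{alg:main} only consults the returned object through its agent-winning strategies.

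First I would transport the realizability question from the ideal product $\G_{\psi_j}\times\G_{\psi_k}$ onto the object the algorithm actually constructs at Line~\ref{line:individual:product}. Lemma~\ref{lem:win-equiv} gives the pointwise equivalences $\G_{\psi_j}\aequiv\awr(\G_{\psi_j})$ and $\G_{\psi_k}\aequiv\awr(\G_{\psi_k})$, and Lemma~\ref{lem:euiv-product} lifts these to $\G_{\psi_j}\times\G_{\psi_k}\aequiv\awr(\G_{\psi_j})\times\awr(\G_{\psi_k})$. Lemma~\ref{lem:and2product} identifies $\G_{\psi_j}\times\G_{\psi_k}$ as a DFA for $\psi_j\wedge\psi_k$, so Theorem~\ref{thm:pre} says realizability is equivalent to the initial state of $\G_{\psi_j}\times\G_{\psi_k}$ being agent-winning, which by agent-equivalence holds iff the initial state of $\awr(\G_{\psi_j})\times\awr(\G_{\psi_k})$ is agent-winning. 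This is precisely the condition tested at Line~\ref{line:individual:solveGame}, justifying the Null branch.

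For the realizable branch, the DFA returned is obtained by extracting the agent-winning region from $\awr(\G_{\psi_j})\times\awr(\G_{\psi_k})$, i.e., is literally $\awr(\awr(\G_{\psi_j})\times\awr(\G_{\psi_k}))$ (up to the interleaved minimization). A second application of Lemma~\ref{lem:win-equiv} at both ends, chained with the equivalence established above, yields
\[
\awr(\awr(\G_{\psi_j})\times\awr(\G_{\psi_k})) \;\aequiv\; \awr(\G_{\psi_j}\times\G_{\psi_k}),
\]
and Lemma~\ref{lem:euiv-minimize} absorbs the minimization steps. Since Equation~(\ref{eq:compose}) is used by Algorithm~\ref{alg:main} only through the agent-winning strategies of its result, this equivalence is exactly what the lemma claims.

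The main obstacle is verifying correctness of the $\GetAwr()$ call at Line~\ref{line:individual:getAwr}, which builds $\awr(\G_{\psi_k})$ on the fly. As the excerpt warns, a standard forward search terminates the moment a single agent move is shown to be winning, which would produce a winning strategy rather than the full winning region. I would therefore check that the adapted routine visits every reachable state of $\G_{\psi_k}$, enumerates all candidate agent moves, retains a state only when some surviving agent move has all its environment replies inside the winning set, and redirects every other transition to a single sink $ew$, in exact agreement with Definition~\ref{def:aequiv} and Equations~(\ref{eq:awr}) and~(\ref{eq:awr-delta}). Once this invariant is established for $\GetAwr()$, the chaining of equivalence lemmas above closes the argument routinely.
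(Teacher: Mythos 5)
Your argument follows the paper's own proof essentially verbatim: the same chain of agent-equivalences via Lemmas~\ref{lem:win-equiv} and~\ref{lem:euiv-product}, combined with Theorems~\ref{thm:pre} and~\ref{thm:main} to reduce realizability of $\psi_1\wedge\psi_2$ to the initial state of $\awr(\G_{\psi_1})\times\awr(\G_{\psi_2})$ being agent-winning, which is exactly what Line~\ref{line:individual:solveGame} tests. You are in fact slightly more careful than the paper on two points it glosses over --- that the object actually constructed is $\awr(\awr(\G_{\psi_1})\times\awr(\G_{\psi_2}))$ rather than literally $\awr(\G_{\psi_1}\times\G_{\psi_2})$, and that $\GetAwr()$ must enumerate all agent choices instead of stopping at the first winning one (the paper defers the latter to its appendix).
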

\begin{proof}
By Lemmas~\ref{lem:win-equiv} and \ref{lem:euiv-product}, we have that $\G_{\psi_1}\times\G_{\psi_2} \aequiv \awr(\G_{\psi_1})\times\awr(\G_{\psi_2}) \aequiv \awr(\G_{\psi_1}\times\G_{\psi_2})$.
If $(\psi_1\wedge\psi_2,\X,\Y)$ is realizable, the initial state of $\awr(\G_{\psi_1})\times\awr(\G_{\psi_2})$ is agent-winning (by Theorems~\ref{thm:pre} and \ref{thm:main}). In this case, $\awr(\G_{\psi_1}\times\G_{\psi_2})$ is returned. Conversely, if $(\psi_1\wedge\psi_2,\X,\Y)$ is unrealizable, the initial state of $\awr(\G_{\psi_1})\times\awr(\G_{\psi_2})$ is not agent-winning (by Theorems~\ref{thm:pre} and \ref{thm:main}), and `Null' is returned.  
\end{proof}

\begin{algorithm}[t]
\LinesNumbered
\DontPrintSemicolon
\caption{$\protect\Compose()$ - Individual}
\label{alg:individual}
\KwIn{A \dfa game $\awr(\G_{\psi_1})$ and an \ltlf specification $(\psi_2,\X,\Y)$}
\KwOut{$\awr(\G_{\psi_1}\times\G_{\psi_2})$ if $(\psi_1\wedge\psi_2,\X,\Y)$ is realizable; Null otherwise.}

$\awr(\G_{\psi_2})\coloneqq\GetAwr(\psi_2,\X,\Y)$\quad\textcolor{blue}{$\rhd$ Minimize.}\;\label{line:individual:getAwr}
$\G_{tmp}\coloneqq\awr(\G_{\psi_1})\times\awr(\G_{\psi_2})$\;\label{line:individual:product}
$\SolveDfaGame(\G_{tmp})$\;\label{line:individual:solveGame}
\If{\textup{the initial state of $\G_{tmp}$ is agent-winning}}
{
  \KwRet $\awr(\G_{\psi_1}\times\G_{\psi_2})$\quad\textcolor{blue}{$\rhd$ Minimize.}\;
}
\Else
{
  \KwRet Null\;
}
\end{algorithm}

This variant fully leverages the benefits of minimization; however, it requires precomputing the agent-winning regions of all sub-specifications prior to minimization. Since this process is 2EXPTIME-complete in the worst case, the computational cost can become prohibitive for large specifications, despite the agent-winning regions being constructed on the fly. To navigate this complexity, we propose a second variant that incorporates on-the-fly synthesis into the composition. 

\subsection{Incremental Composition} Instead of precomputing the agent-winning region of a new sub-specification before composition, this approach utilizes the previously computed agent-winning region of those composed sub-specifications to guide the construction of the agent-winning region of the new sub-specification during composition. Hence, the composition is conducted in an incremental manner. 


The incremental composition is outlined in Algorithm \ref{alg:incremental}. This algorithm essentially solves the \dfa game $\awr(\G_{\psi_1})\times\G_{\psi_2}$ using an on-the-fly synthesis technique, leveraging the precomputed agent-winning region $\awr(\G_{\psi_1})$ to guide the search over $\G_{\psi_2}$ effectively to guide the search, meanwhile composing them. Algorithm \ref{alg:incremental} is a variant of the existing on-the-fly \ltlf synthesis approach (cf. \citep{XLZSLPV24}). It traverses the state space of $\awr(\G_{\psi_1})\times\G_{\psi_2}$ in a depth-first manner, during which all agent-winning states within $\awr(\G_{\psi_1})\times\G_{\psi_2}$ are identified and the corresponding agent-winning region is built. The key difference from the standard on-the-fly synthesis search is introduced at Line~\ref{line:incremental:ewinY}. With the aid of $awr\_\G_1$, i.e, $\awr(\G_{\psi_1})$, the search space is pruned by eliminating agent choices that allow the environment to win, captured in the set $\EwinAgentChoices(s_1, awr\_\G_1)$. Specifically, given $awr\_\G_1=(2^{\X\cup\Y},\awin(\G_1)\cup\{ew\},\delta_1',F_1)$ and $s_1\in\awin(\G_1)\cup\{ew\}$, we have $\EwinAgentChoices(s_1, awr\_\G_1) =\{Y\in2^\Y\mid\exists X\in2^\X.\delta_1'(s_1,X\cup Y)=ew\}$.
Additionally, the composition of \DFA states occurs at Lines \ref{line:incremental:readSuccS1}-\ref{line:incremental:stateCompose}. The state $s_1'$ is retrieved from the precomputed $awr\_\G_1$ (i.e., $\awr(\G_{\psi_1})$), while the state $s_2'$ of $\G_{\psi_2}$ is computed on the fly at the moment. 

\begin{algorithm}[!t]
\LinesNumbered
\DontPrintSemicolon
\caption{$\protect\Compose()$ - Incremental}
\label{alg:incremental}
\KwIn{A \dfa $\awr(\G_{\psi_1})$ and an \ltlf specification $(\psi_2,\X,\Y)$}
\KwOut{$\awr(\G_{\psi_1}\times\G_{\psi_2})$ if $(\psi_1\wedge\psi_2,\X,\Y)$ is realizable; Null otherwise.}

$awin\_state$, $ewin\_state$, $undetermined\_state\coloneqq\emptyset$\;
$\DFSearch((\psi_1,\psi_2),\awr(\G_{\psi_1}))$\;
\If{$(\psi_1,\psi_2)\in awin\_state$}
{
  \KwRet $\BuildAwr()$\quad\textcolor{blue}{$\rhd$ Minimize.}\;
}
\Else
{
  \KwRet Null\;
}
\;
\myproc{$\DFSearch((s_1,s_2),awr\_\G_1)$}
{
  \If{$(s_1,s_2)\in awin\_state\cup ewin\_state\;\cup$ $~~~~~~~~~~~~~~~~~~~~ undetermined\_state$}
  {
    \KwRet\;
  }
  \If{$\IsAccepting((s_1,s_2))$}
  {
    $awin\_state.insert(s_1,s_2)$\;
  }
  $undetermined\_state.insert((s_1,s_2))$\;
  $ewin\_for\_all\_Y\coloneqq\trueVal$\;
  \For{$Y\in (2^\Y\setminus\EwinAgentChoices(s_1, awr\_\G_1))$}
  {\label{line:incremental:ewinY}
    $ewin\_for\_some\_X\coloneqq\falseVal$\;
    $undetermined\_for\_some\_X\coloneqq\falseVal$\;
    \For{$X\in2^\X$}
    {
      $s_1'\coloneqq \ReadSuccessor(awr\_\G_1,s_1,X\cup Y)$\;\label{line:incremental:readSuccS1}
      $s_2'\coloneqq \GetSuccessor(s_2,X\cup Y)$\;
      $\DFSearch((s_1',s_2'),awr\_\G_1)$\;\label{line:incremental:stateCompose}
      \If{$(s_1',s_2')\in ewin\_state$}
      {
        $ewin\_for\_some\_X\coloneqq\trueVal$\;
        \Break\;
      }
      \ElseIf{$(s_1',s_2')\in undetermined\_state$}
      {
        $undetermined\_for\_some\_X\coloneqq\trueVal$\;
        \Continue\;
      }
    }
    \If{$\neg ewin\_for\_some\_X$}
    {
      $ewin\_for\_all\_Y\coloneqq\falseVal$\;
      \If{$\neg undetermined\_for\_some\_X$}
      {
        $undetermined\_state.remove((s_1,s_2))$\;
        $awin\_state.insert((s_1,s_2))$\;
      }
    }
  }
  \If{$ewin\_for\_all\_Y$\textup{ and }$(s_1,s_2)\notin awin\_state$}
  {
    $undetermined\_state.remove((s_1,s_2))$\;
    $ewin\_state.insert((s_1,s_2))$\;
  }
  \If{$\IsSccRoot((s_1,s_2))$}
  {
    $scc\coloneqq\GetScc((s_1,s_2))$\;
    $\BackwardSearch(scc)$\;
  }
}

\end{algorithm}

\begin{lemma}\label{lem:algo-incremental}
The implementation of $\Compose()$ in Algorithm~\ref{alg:incremental} satisfies Equation~(\ref{eq:compose}).
\end{lemma}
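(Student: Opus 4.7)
The plan is to show that Algorithm~\ref{alg:incremental} is operationally equivalent to solving the \dfa game on $\awr(\G_{\psi_1}) \times \G_{\psi_2}$ via an on-the-fly search, and then to appeal to the agent-equivalences established in Section~\ref{sec:foundation} to lift the result to $\awr(\G_{\psi_1} \times \G_{\psi_2})$. Concretely, combining Lemmas~\ref{lem:win-equiv} and~\ref{lem:euiv-product}, we have $\G_{\psi_1} \times \G_{\psi_2} \aequiv \awr(\G_{\psi_1}) \times \G_{\psi_2} \aequiv \awr(\G_{\psi_1} \times \G_{\psi_2})$. Hence it suffices to show that the DFS inside \Compose{} correctly classifies the initial state $(\psi_1,\psi_2)$ as agent-winning in $\awr(\G_{\psi_1}) \times \G_{\psi_2}$ iff $(\psi_1 \wedge \psi_2, \X, \Y)$ is realizable, and in the realizable case that $\BuildAwr()$ yields exactly $\awr(\G_{\psi_1} \times \G_{\psi_2})$ (up to minimization).

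First I would justify the pruning at Line~\ref{line:incremental:ewinY}. By Definition~\ref{def:aequiv}, for any state $s_1 \in \awin(\G_{\psi_1})\cup\{ew\}$ and any $Y \in \EwinAgentChoices(s_1, awr\_\G_1)$, there exists $X \in 2^\X$ such that $\delta_1'(s_1, X \cup Y) = ew$. By construction of $awr\_\G_1$, such a move leads out of $\awin(\G_{\psi_1})$, and by the proof of Lemma~\ref{lem:win-equiv} the environment has a winning response from any state outside $\awin(\G_{\psi_1})$; consequently the corresponding successor in the product is environment-winning. Thus removing these $Y$ choices from consideration cannot eliminate any agent-winning strategy in $\awr(\G_{\psi_1}) \times \G_{\psi_2}$, so the pruned game has the same agent-winning region as the full product.

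Next, with the pruning shown to be sound, Algorithm~\ref{alg:incremental} reduces to the standard on-the-fly \ltlf synthesis DFS of \citep{XLZSLPV24} applied to the arena $\awr(\G_{\psi_1}) \times \G_{\psi_2}$, where successor states are constructed by pairing $\ReadSuccessor$ on the precomputed $awr\_\G_1$ with $\GetSuccessor$ on the on-the-fly construction of $\G_{\psi_2}$. The bookkeeping on $awin\_state$, $ewin\_state$, and $undetermined\_state$, together with the SCC-based backward propagation triggered at \IsSccRoot, is exactly the mechanism already proven correct in \citep{XLZSLPV24} for classifying the initial state. Applying that correctness result gives us that $(\psi_1,\psi_2) \in awin\_state$ after $\DFSearch$ iff its corresponding state is agent-winning in $\awr(\G_{\psi_1}) \times \G_{\psi_2}$, which by the agent-equivalence chain above is iff $(\psi_1 \wedge \psi_2, \X, \Y)$ is realizable (Theorems~\ref{thm:pre} and~\ref{thm:main}). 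When realizable, $\BuildAwr()$ assembles the identified agent-winning states into a \dfa over $\awin(\G_{\psi_1} \times \G_{\psi_2}) \cup \{ew\}$ with the transitions prescribed by Equation~(\ref{eq:awr-delta}) and then minimizes, yielding $\awr(\G_{\psi_1} \times \G_{\psi_2})$ (up to the agent-equivalence preserved by Lemma~\ref{lem:euiv-minimize}); when unrealizable the algorithm returns Null, matching Equation~(\ref{eq:compose}).

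The main obstacle I expect is rigorously arguing that the DFS, despite exploring only the pruned subgraph induced by $\awr(\G_{\psi_1})$, still discovers \emph{every} agent-winning state of the full product rather than merely deciding the initial state's status. This matters because the returned agent-winning region is consumed by subsequent calls to \Compose{} in Algorithm~\ref{alg:main}, so completeness, not just initial-state correctness, is required. I would address this by observing that $\DFSearch$ recurses into every pair $(s_1', s_2')$ reachable via a non-pruned $Y$ and any $X$, and that (by the first paragraph's argument) no pruned edge could have led to an additional agent-winning state; hence the classification is sound and complete over precisely the agent-winning portion of $\awr(\G_{\psi_1}) \times \G_{\psi_2}$, which is what Definition~\ref{def:aequiv} needs to guarantee that the returned \dfa is a legitimate agent-winning region for $\G_{\psi_1} \times \G_{\psi_2}$.
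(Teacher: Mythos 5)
Your proposal is correct and follows essentially the same route as the paper's proof: establish the chain $\G_{\psi_1}\times\G_{\psi_2} \aequiv \awr(\G_{\psi_1})\times\G_{\psi_2} \aequiv \awr(\G_{\psi_1}\times\G_{\psi_2})$ via Lemmas~\ref{lem:win-equiv} and~\ref{lem:euiv-product}, then invoke Theorems~\ref{thm:pre} and~\ref{thm:main} to tie realizability of $(\psi_1\wedge\psi_2,\X,\Y)$ to the initial state of the searched arena. Your additional justification of the soundness of the $\EwinAgentChoices$ pruning and of the completeness of the computed winning region fills in details the paper's terse proof leaves implicit (deferring them to the correctness of the on-the-fly search of \citep{XLZSLPV24}), but it does not change the argument's structure.
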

\begin{proof}
By Lemmas~\ref{lem:win-equiv} and \ref{lem:euiv-product}, we have that $\G_{\psi_1}\times\G_{\psi_2} \aequiv \awr(\G_{\psi_1})\times\G_{\psi_2} \aequiv \awr(\G_{\psi_1}\times\G_{\psi_2})$.
If $(\psi_1\wedge\psi_2,\X,\Y)$ is realizable, the initial state of $\awr(\G_{\psi_1})\times\G_{\psi_2}$ is agent-winning (by Theorems~\ref{thm:pre} and \ref{thm:main}). In this case, $\awr(\G_{\psi_1}\times\G_{\psi_2})$ is returned. Conversely, if $(\psi_1\wedge\psi_2,\X,\Y)$ is unrealizable, the initial state of $\awr(\G_{\psi_1})\times\awr(\G_{\psi_2})$ is not agent-winning (by Theorems~\ref{thm:pre} and \ref{thm:main}), and `Null' is returned.
\end{proof}
%
Comparing two composition variants reveals key differences in their approaches. In the individual composition~(Algorithm \ref{alg:individual}), search and composition are performed sequentially, with the synthesis search space being $\G_{\psi_2}$. In contrast, the incremental composition~(Algorithm~\ref{alg:incremental}) integrates search into the composition, executing them in parallel. Hence the search space is $\awr(\G_{\psi_1})\times\G_{\psi_2}$.

\begin{theorem}
For an \ltlf specification $(\bigwedge_{1\leq i\leq n}\varphi_i,\X,\Y)$,  $(\bigwedge_{1\leq i\leq n}\varphi_i,\X,\Y)$ is realizable iff Algorithm \ref{alg:main} returns a non-empty strategy.
\end{theorem}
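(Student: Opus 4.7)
The plan is to prove both directions separately, relying heavily on the already-established Theorems \ref{thm:sub-unrea} and \ref{thm:main} and the specification of \texttt{Compose} captured in Equation~(\ref{eq:compose}) and verified by Lemmas \ref{lem:algo-individual} and \ref{lem:algo-incremental}. For the soundness direction (Algorithm \ref{alg:main} returns a non-empty strategy $\Rightarrow$ the specification is realizable), I would observe that reaching the final \texttt{BuildStrategy}$(awr\_\G)$ call requires every invocation of \texttt{Compose} in the second \textit{for}-loop to return a non-Null value. By the specification of \texttt{Compose}, this forces each intermediate conjunction $\varphi_1\wedge\cdots\wedge\varphi_i$ to be realizable, and the loop invariant $awr\_\G = \awr(\G_{\varphi_1}\times\cdots\times\G_{\varphi_n})$ holds at termination. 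Theorem \ref{thm:main} then directly yields realizability of $(\bigwedge_{1\le i\le n}\varphi_i,\X,\Y)$, and \texttt{BuildStrategy} extracts a witnessing agent strategy from the agent-winning region.

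For the completeness direction (realizable $\Rightarrow$ the algorithm returns a non-empty strategy), I would first argue that the first \textit{for}-loop cannot return $\emptyset$: if the full conjunction is realizable with some winning strategy $g$, then the very same $g$ witnesses realizability of each $(\varphi_i,\X,\Y)$, since any trace satisfying $\bigwedge_i\varphi_i$ also satisfies each conjunct. Thus, by the contrapositive of Theorem \ref{thm:sub-unrea}, no sub-specification triggers the early-exit branch. For the second \textit{for}-loop, I would prove the loop invariant $awr\_\G = \awr(\G_{\varphi_1}\times\cdots\times\G_{\varphi_i})$ by induction on $i$. The base case holds trivially since $awr\_\G = \G_\ttrue$ and $\G_\ttrue$ acts as an identity under \DFA product up to agent-equivalence (via Lemmas \ref{lem:win-equiv} and \ref{lem:euiv-product}). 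The inductive step follows from the induction hypothesis together with Equation~(\ref{eq:compose}): since $\varphi_1\wedge\cdots\wedge\varphi_i$ is realizable (by the same trace-level argument applied to $g$), \texttt{Compose} returns $\awr(\G_{\varphi_1}\times\cdots\times\G_{\varphi_i})$ rather than Null, preserving the invariant.

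The main obstacle is threading the inductive argument cleanly through the two layers of abstraction, namely the agent-equivalence relation on \DFA games and the composition semantics. In particular, care is required to justify that \texttt{Compose}'s precondition (that $awr\_\G$ is the agent-winning region of the previously composed game) is met at every iteration, which is precisely the content of the loop invariant; this is a mildly delicate bootstrap since the correctness of one iteration relies on the invariant from the previous iteration. A second, more cosmetic obstacle is reconciling the algorithm's actual behaviour in the final iteration (where only realizability is checked, not full construction of the agent-winning region, as noted after Algorithm \ref{alg:main}) with the uniform treatment used in the proof; this is handled by observing that \texttt{BuildStrategy} needs only enough information to extract a uniform agent-winning positional strategy from the final composition, which the realizability check plus the stored intermediate winning regions suffice to provide.
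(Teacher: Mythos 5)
Your proposal is correct and follows essentially the same route as the paper: establish the loop invariant $awr\_\G=\awr(\G_{\varphi_1}\times\cdots\times\G_{\varphi_i})$ by induction using the \texttt{Compose} specification of Equation~(\ref{eq:compose}) (via Lemmas~\ref{lem:algo-individual} and~\ref{lem:algo-incremental}), then conclude with Theorems~\ref{thm:sub-unrea} and~\ref{thm:main}. You merely unfold the paper's chain of equivalences into two explicit directions and spell out the base case and the bootstrap of the invariant, which the paper leaves implicit.
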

\begin{proof}
With Lemmas~\ref{lem:algo-individual} and \ref{lem:algo-incremental}, we can prove the loop invariants $awr\_\G=\awr(\G_{\varphi_1}\times\cdots\times\G_{\varphi_i})$ for the second \textit{for}-loop by induction over the value of $i$. Then we have:

$(\bigwedge_{1\leq i\leq n}\varphi_i,\X,\Y)$ is realizable.
$\xLeftrightarrow{\text{Theorem~\ref{thm:sub-unrea}}}$
For every $i$ with $1\leq i\leq n$, $(\varphi_i,\X,\Y)$ and $(\bigwedge_{1\leq i\leq n}\varphi_i,\X,\Y)$ are realizable. 
$\Leftrightarrow$
Algorithm \ref{alg:main} does not terminate within the two \textit{for}-loops and it returns a non-empty strategy.
\end{proof}






\section{Experimental Evaluation}\label{sec:exp}
We implemented the compositional synthesis approach in a prototype tool called \tool~\citep{artifact} and compared it against state-of-the-art \ltlf synthesizers. This section presents experimental results demonstrating that our compositional \ltlf synthesis approach outperforms existing \ltlf synthesizers. Among the two composition variants, the incremental composition demonstrates better performance than the individual composition. Therefore, unless otherwise specified, the default setting in \tool is the incremental composition variant. 

\subsection{Setup}
\paragraph{Benchmarks.} We collected, in total, 3380 \LTLf synthesis instances from literature: 3200 \emph{Random} instances \citep{ZTLPV17,BLTV20,DF21,XLHXLPSV24}, 140 \emph{Two-Player-Games} instances---including 20 \emph{single-counter}, 20 \emph{double-counters}, and 100 \emph{Nim} \citep{TV19,BLTV20}, and 40 \emph{Patterns} instances \citep{XLZSPV21}. 

\paragraph{Baseline.} We evaluated the performance of our compositional approach by comparing \tool with the four leading \ltlf synthesis tools: \lisa~\citep{BLTV20}, \lydia~\citep{DF21}, \nike~\citep{Favorito23}, and \tople\citep{XLZSLPV24}. Among these, \lisa and \lydia represent state-of-the-art \ltlf synthesis tools that are based on the backward approach, while \nike and \tople implement the forward on-the-fly synthesis approach. The correctness of \tool is empirically validated by comparing its results with those of the baseline tools.

\paragraph{Running Platform and Resources.} The experiments were run on a CentOS 7.4 cluster, where each instance has exclusive access to a processor core of the Intel Xeon 6230 CPU (2.1 GHz), 10 GB of memory, and a 30-minute time limit.


\begin{figure*}[ht]
    \centering
    \hfill
    \subfloat[]{
        \centering
        \includegraphics[width=0.23\linewidth]{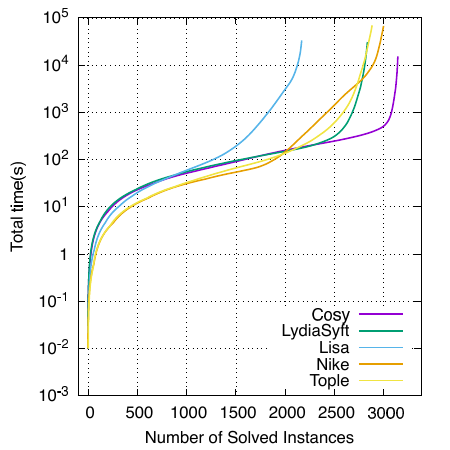}
    	\label{fig:speed}
    }
    \subfloat[]{
        \centering
        \includegraphics[width=0.23\linewidth]{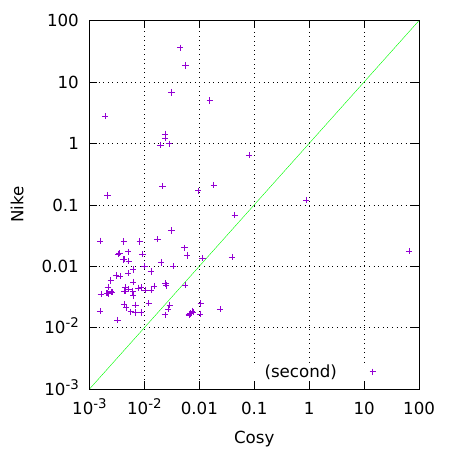}
        \label{fig:state}
    }
    \hfill
    \subfloat[]{
        \centering
       \includegraphics[width=0.23\linewidth]{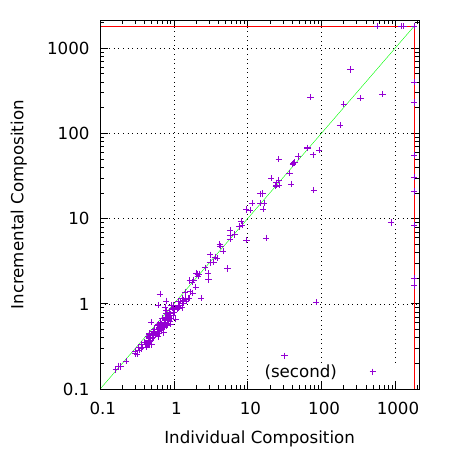}
	  \label{fig:time}
    }
    \hfill
    \subfloat[]{
        \centering
        \includegraphics[width=0.23\linewidth]{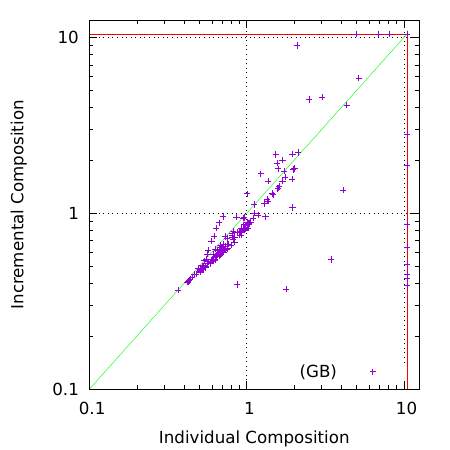}
    	\label{fig:memory}
    }
    \caption{Figure~\ref{fig:speed} - Comparison of cumulative solved instances across all evaluated tools over time. Figure~\ref{fig:state} - Comparison of the average time cost per \DFA state when \tool and \nike search the \DFA w.r.t. the original specification. Figures~\ref{fig:time} and \ref{fig:memory} - Comparison of the time and memory costs of individual and incremental compositions for instances involving composition operations, respectively. Points on the red line represent instances where one approach failed to solve.\\~}
\end{figure*}

\subsection{Results and Discussion}

\subsubsection{Comparison with Baseline} Out of a total of 3380 instances, \tool, \lydia, \lisa, \nike, and \tople successfully solve \textbf{3149}, 2835, 2169, 3000, and 2883 instances respectively, with \tool achieving the highest number of solved instances. Figure~\ref{fig:speed} shows that \tool in general outperforms other evaluated tools, solving a higher number of instances with less time cost. This result demonstrates the overall outperformance of our compositional approach.

Note that among all the 3380 instances, 1689 instances are in the form of conjunctions of \LTLf formulas, while the remaining 1691 cannot be decomposed into conjuncts. To specifically evaluate the effectiveness of our compositional approach, the subsequent comparison and discussion focuses only on the \emph{1689 decomposable instances}, excluding the non-decomposable ones. 

\begin{table}[ht]
\centering
\caption{Comparison of the number of solved instances across different tools for the 1689 decomposable instances.}\label{tab:overall}
\scriptsize
\renewcommand{\arraystretch}{1.6}
\begin{tabular}{ll|rrrrr}
\hline
                                             &              & \tool & \lydia & \lisa & \nike & \tople \\ \hline
\multicolumn{1}{p{0.9cm}|}{\multirow{2}{=}{\emph{Random\& Patterns}}} & Realizable   & 388    & 379     & 322    &\textbf{391}     & 364     \\
\multicolumn{1}{l|}{}                        & Unrealizable & \textbf{1145}    & 978     & 995    & 1051    & 921    \\ \cline{1-2}
\multicolumn{1}{p{0.9cm}|}{\multirow{3}{=}{\emph{Two-
Player-Games}}}    & \emph{s-counter}    & \textbf{12}   & \textbf{12}    & 7   & 5   & 4    \\
\multicolumn{1}{l|}{}                        & \emph{d-counters}   & \textbf{6}   & \textbf{6}    & 5   & 4   & \textbf{6}    \\
\multicolumn{1}{l|}{}                        & \emph{Nim}          & \textbf{39}   & 20    & 11   & 11   & 4    \\ \hline
\multicolumn{2}{l|}{Uniquely solved}                        & \textbf{75}   & 2    & 2   & 0   & 1    \\ \cline{1-2}
\multicolumn{2}{l|}{Total}                                  & \textbf{1590}   & 1395    & 1340   & 1462   & 1299    \\ \hline
\end{tabular}
\end{table}

Table~\ref{tab:overall} shows the number of decomposable instances solved by different tools. \tool, leveraging the compositional approach, demonstrates a significant advantage in solving capability. It achieves the highest number of uniquely solved instances (75) and the highest total number of solved instances~(1590), both numbers are substantially greater than those of the other tools. Across all benchmark families, \tool shows optimal performance, with the sole exception of a slight drop on \emph{Random\&Patterns-Realizable}, where \tool solves 388 instances, compared to the best 391. This is because, for realizable instances, \tool requires computing complete strategies for all sub-specifications, whereas the best-performing \nike operates on the fly over the original specification, potentially reducing the search space~(see below for more detailed analysis).

\subsubsection{Ablation Study} We now analyze the sources of performance improvement achieved through the compositional approach. For a given decomposable \ltlf specification $(\varphi_1\wedge\cdots\wedge\varphi_n,\X,\Y)$, there are three possible scenarios.

\textbf{\myi} There exists $i$ with $1\leq i\leq n$, such that $(\varphi_i,\X,\Y)$ is unrealizable. In this case, there exists an unrealizable single sub-specification, and no composition operation is performed. Among the 1590 solved decomposable instances, 1031 fall into this category.

\textbf{\myii} There exists $i$ with $1< i< n$ such that $(\varphi_1\wedge\cdots\wedge\varphi_i,\X,\Y)$ is unrealizable. In this case, a sub-specification composed from multiple single sub-specifications is determined to be unrealizable~(note that since all single sub-specifications are realizable, the composition operations are performed). Therefore, the state space w.r.t. the original specification is not searched. Among the solved decomposable instances, 69 belong to this category.

\textbf{\myiii} When neither of the above occurs, the state space w.r.t. the original specification is searched. Among the solved decomposable instances, 490 fall into this case. Figure~\ref{fig:state} compares the average time cost per \DFA state when the \DFA w.r.t. the original specification is searched between \tool and \nike. As depicted, a significantly larger number of points lie above the green reference line, indicating that our compositional approach effectively reduces the cost of subsequent searches by leveraging precomputed agent-winning regions.

\subsubsection{Comparing Individual and Incremental Compositions} To compare the individual and incremental composition variants, we focus on instances where composition operations are performed, corresponding to the latter two scenarios discussed above. For these instances, individual composition solves 554 cases, while incremental composition solves 559. Moreover, individual composition and incremental composition uniquely solve 3 and 8 instances, respectively, that the other approach fails to solve. Figures~\ref{fig:time} and \ref{fig:memory} compare the time and memory costs of the two variants, respectively. Both figures show similar distributions, with data points scattered on both sides of the green reference line. This suggests that each approach outperforms the other in certain instances, highlighting complementary strengths.


\section{Concluding Remarks}\label{sec:conclude}

We have presented a compositional on-the-fly approach to \ltlf synthesis, where composition operations are conducted at the synthesis level. An empirical comparison of our method with state-of-the-art \ltlf synthesizers shows that it achieves the best overall performance. Several future research directions are under consideration. First, the current implementation of \tool relies heavily on Binary Decision Diagrams ({\BDD}s), which requires exponential space relative to the number of variables, thereby limiting the capability of \tool. Exploring methods to reduce the usage and dependency on {\BDD}s could enhance our synthesizer. Second, while our compositional approach is currently limited to conjunctions, it would be interesting to extend the composition operation to support a richer set of operators.



\begin{ack}
We sincerely thank the anonymous reviewers for their valuable comments, which significantly improved the quality of this paper. This work was supported by the National Natural Science Foundation of China (Grants \#62372178 and \#U21B2015) and the Shanghai Collaborative Innovation Center of Trusted Industry Internet Software.
\end{ack}



\bibliography{cav,ok}

\begin{thebibliography}{27}
\providecommand{\natexlab}[1]{#1}
\providecommand{\url}[1]{\texttt{#1}}
\expandafter\ifx\csname urlstyle\endcsname\relax
  \providecommand{\doi}[1]{doi: #1}\else
  \providecommand{\doi}{doi: \begingroup \urlstyle{rm}\Url}\fi

\bibitem[art()]{artifact}
Artifact for this paper is available on \url{https://drive.google.com/file/d/1Br6N_-j0dTbuQOe6ayCKCxELNVJp-ZYm/view?usp=sharing}.

\bibitem[Bansal et~al.(2020)Bansal, Li, Tabajara, and Vardi]{BLTV20}
S.~Bansal, Y.~Li, L.~Tabajara, and M.~Vardi.
\newblock Hybrid compositional reasoning for reactive synthesis from finite-horizon specifications.
\newblock In \emph{The Thirty-Fourth {AAAI} Conference on Artificial Intelligence}, pages 9766--9774. {AAAI} Press, 2020.

\bibitem[Camacho and McIlraith(2019)]{camacho2019strong}
A.~Camacho and S.~A. McIlraith.
\newblock Strong fully observable non-deterministic planning with {LTL} and {LTLf} goals.
\newblock In \emph{Proceedings of the Twenty-Eighth International Joint Conference on Artificial Intelligence, {IJCAI-19}}, pages 5523--5531, 2019.

\bibitem[Cecconi et~al.(2020)Cecconi, De~Giacomo, Di~Ciccio, Maggi, and Mendling]{AGCFJ20}
A.~Cecconi, G.~De~Giacomo, C.~Di~Ciccio, F.~M. Maggi, and J.~Mendling.
\newblock A temporal logic-based measurement framework for process mining.
\newblock In \emph{2020 2nd International Conference on Process Mining (ICPM)}, pages 113--120. IEEE, 2020.

\bibitem[Cimatti et~al.(2003)Cimatti, Pistore, Roveri, and Traverso]{Cimatti03}
A.~Cimatti, M.~Pistore, M.~Roveri, and P.~Traverso.
\newblock Weak, strong, and strong cyclic planning via symbolic model checking.
\newblock 1--2\penalty0 (147), 2003.

\bibitem[De~Giacomo and Favorito(2021)]{DF21}
G.~De~Giacomo and M.~Favorito.
\newblock Compositional approach to translate {LTLf}/{LDLf} into deterministic finite automata.
\newblock In \emph{Proceedings of the International Conference on Automated Planning and Scheduling}, volume~31, pages 122--130. AAAI Press, 2021.

\bibitem[De~Giacomo and Vardi(2013)]{GV13}
G.~De~Giacomo and M.~Y. Vardi.
\newblock Linear temporal logic and linear dynamic logic on finite traces.
\newblock In \emph{Proceedings of the Twenty-Third international joint conference on Artificial Intelligence}, pages 854--860. AAAI Press, 2013.

\bibitem[De~Giacomo and Vardi(2015)]{GV15}
G.~De~Giacomo and M.~Y. Vardi.
\newblock Synthesis for {LTL} and {LDL} on finite traces.
\newblock In \emph{Proceedings of the 24th International Conference on Artificial Intelligence}, pages 1558--1564. AAAI Press, 2015.

\bibitem[De~Giacomo et~al.(2019)De~Giacomo, Iocchi, Favorito, and Patrizi]{GLMF19}
G.~De~Giacomo, L.~Iocchi, M.~Favorito, and F.~Patrizi.
\newblock Foundations for restraining bolts: Reinforcement learning with {LTLf}/{LDLf} restraining specifications.
\newblock In \emph{Proceedings of the international conference on automated planning and scheduling}, volume~29, pages 128--136, 2019.

\bibitem[Favorito(2023)]{Favorito23}
M.~Favorito.
\newblock Forward ltlf synthesis: {DPLL} at work.
\newblock In \emph{AIxIA}, {CEUR} Workshop Proceedings, 2023.

\bibitem[Geffner and Bonet(2013)]{GeffnerBonet2013}
H.~Geffner and B.~Bonet.
\newblock \emph{A Concise Introduction to Models and Methods for Automated Planning}.
\newblock Morgan \& Claypool Publishers, 2013.

\bibitem[Gerevini and Long(2005)]{pddl3}
A.~Gerevini and D.~Long.
\newblock Plan constraints and preferences in {PDDL3}.
\newblock Technical report, Technical Report 2005-08-07, Department of Electronics for Automation~…, 2005.

\bibitem[Ghallab et~al.(2016)Ghallab, Nau, and Traverso]{GhallabNauTraverso2016}
M.~Ghallab, D.~S. Nau, and P.~Traverso.
\newblock \emph{Automated planning and acting}.
\newblock Cambridge, 2016.

\bibitem[Giacomo et~al.(2022)Giacomo, Favorito, Li, Vardi, Xiao, and Zhu]{GFLVXZ22}
G.~D. Giacomo, M.~Favorito, J.~Li, M.~Y. Vardi, S.~Xiao, and S.~Zhu.
\newblock {LTLf} synthesis as and-or graph search: Knowledge compilation at work.
\newblock In \emph{Proceedings of the Thirty-First International Joint Conference on Artificial Intelligence}, pages 3292--3298. AAAI Press, 2022.

\bibitem[Gr{\"a}del(2007)]{Gradel07}
E.~Gr{\"a}del.
\newblock \emph{Finite Model Theory and Descriptive Complexity}, pages 125--230.
\newblock Springer Berlin Heidelberg, Berlin, Heidelberg, 2007.

\bibitem[Hopcroft(1971)]{Hop71}
J.~Hopcroft.
\newblock An $n$ log $n$ algorithm for minimizing the states in a finite automaton.
\newblock In Z.~Kohavi, editor, \emph{The Theory of Machines and Computations}, pages 189--196. Academic Press, 1971.

\bibitem[Li et~al.(2025)Li, Xiao, Zhu, Li, and Pu]{li2025compositionalframeworkontheflyltlf}
Y.~Li, S.~Xiao, S.~Zhu, J.~Li, and G.~Pu.
\newblock A compositional framework for on-the-fly ltlf synthesis, 2025.
\newblock URL \url{https://arxiv.org/abs/2508.04116}.

\bibitem[Moore(1956)]{Moo56}
E.~Moore.
\newblock Gedanken-experiments on sequential machines.
\newblock In \emph{Automata studies}, volume~34 of \emph{Ann. Math. Studies}, pages 129--153. Princeton Univ. Press, 1956.

\bibitem[Pe{\v{s}}i{\'c} et~al.(2010)Pe{\v{s}}i{\'c}, Bo{\v{s}}na{\v{c}}ki, and van~der Aalst]{PBV10}
M.~Pe{\v{s}}i{\'c}, D.~Bo{\v{s}}na{\v{c}}ki, and W.~M. van~der Aalst.
\newblock Enacting declarative languages using {LTL}: avoiding errors and improving performance.
\newblock In \emph{Model Checking Software: 17th International SPIN Workshop}, pages 146--161. Springer, 2010.

\bibitem[Pnueli(1977)]{Pnu77}
A.~Pnueli.
\newblock The temporal logic of programs.
\newblock In \emph{18th Annual Symposium on Foundations of Computer Science}, pages 46--57. IEEE, 1977.
\newblock \doi{10.1109/SFCS.1977.32}.

\bibitem[Pnueli(1985)]{Pnu85}
A.~Pnueli.
\newblock Applications of temporal logic to the specification and verification of reactive systems: A survey of current trends.
\newblock In \emph{Proc. Advanced School on Current Trends in Concurrency}, pages 510--584. Volume~224, LNCS, Springer, 1985.

\bibitem[Reiter(2001)]{Reiter2001}
R.~Reiter.
\newblock \emph{Knowledge in Action: {L}ogical Foundations for Specifying and Implementing Dynamical Systems}.
\newblock MIT, 2001.

\bibitem[Tabajara and Vardi(2019)]{TV19}
L.~M. Tabajara and M.~Y. Vardi.
\newblock Partitioning techniques in {LTLf} synthesis.
\newblock In \emph{Proceedings of the 28th International Joint Conference on Artificial Intelligence}, IJCAI 19, pages 5599--5606. AAAI Press, 2019.

\bibitem[Xiao et~al.(2021)Xiao, Li, Zhu, Shi, Pu, and Vardi]{XLZSPV21}
S.~Xiao, J.~Li, S.~Zhu, Y.~Shi, G.~Pu, and M.~Y. Vardi.
\newblock On the fly synthesis for {LTL} over finite traces.
\newblock In \emph{The Thirty-Fourth {AAAI} Conference on Artificial Intelligence}, pages 6530--6537. {AAAI} Press, 2021.

\bibitem[Xiao et~al.(2024)Xiao, Li, Huang, Xu, Li, Pu, Strichman, and Vardi]{XLHXLPSV24}
S.~Xiao, Y.~Li, X.~Huang, Y.~Xu, J.~Li, G.~Pu, O.~Strichman, and M.~Y. Vardi.
\newblock Model-guided synthesis for {LTL} over finite traces.
\newblock In \emph{International Conference on Verification, Model Checking, and Abstract Interpretation}, pages 186--207. Springer, 2024.

\bibitem[Xiao et~al.(2025)Xiao, Li, Zhu, Sun, Li, Pu, and Vardi]{XLZSLPV24}
S.~Xiao, Y.~Li, S.~Zhu, J.~Sun, J.~Li, G.~Pu, and M.~Y. Vardi.
\newblock An on-the-fly synthesis framework for ltl over finite traces.
\newblock \emph{ACM Trans. Softw. Eng. Methodol.}, July 2025.
\newblock ISSN 1049-331X.
\newblock \doi{10.1145/3749101}.
\newblock URL \url{https://doi.org/10.1145/3749101}.

\bibitem[Zhu et~al.(2017)Zhu, Tabajara, Li, Pu, and Vardi]{ZTLPV17}
S.~Zhu, L.~M. Tabajara, J.~Li, G.~Pu, and M.~Y. Vardi.
\newblock Symbolic {LTLf} synthesis.
\newblock In \emph{Proceedings of the 26th International Joint Conference on Artificial Intelligence}, pages 1362--1369. AAAI Press, 2017.

\end{thebibliography}
\clearpage
\appendix

\section{Computing Agent-Winning Region on the Fly}\label{app:awrOnTheFly}

Algorithm~\ref{alg:full-forward} presents the on-the-fly implementation of $\GetAwr()$, which identifies all the agent-winning states and builds $\awr(\G_\varphi)$ for realizable input specifications. The differences from the existing on-the-fly synthesis procedure are as follows.
\begin{itemize}
\item It recursively traverses every state that is not identified as environment-winning, even though a state that has been determined to be agent-winning.

\item All possible agent choices (i.e., $2^\Y$) are searched, even though the current state has been determined to be agent-winning.
\end{itemize}

\begin{algorithm}[!htbp]
\LinesNumbered
\DontPrintSemicolon
\caption{$\protect\GetAwr()$ by On-the-Fly Synthesis}\small
\label{alg:full-forward}
\KwIn{An \ltlf specification $(\varphi,\X,\Y)$}
\KwOut{$\awr(\G_{\varphi})$ if $(\varphi,\X,\Y)$ is realizable; Null otherwise.}

$awin\_state$, $ewin\_state$, $undetermined\_state\coloneqq\emptyset$\;
$\DFSearch(\varphi)$\;
\If{$\varphi \in awin\_state$}
{
  \textcolor{blue}{$\rhd$ Build $\awr(\G_\varphi)$ with $\awin(\G_\varphi)=awin\_state$.}\;
  \KwRet $\BuildAwr()$\;
}
\Else
{
  \KwRet Null\;
}
\;
\myproc{$\DFSearch(s)$}
{
  \If{$s\in awin\_state\cup ewin\_state\;\cup undetermined\_state$}
  {
    \KwRet\;
  }
  \If{$\IsAccepting(s)$}
  {
    $awin\_state.insert(s)$\;
  }
  $undetermined\_state.insert(s)$\;
  $ewin\_for\_all\_Y\coloneqq\trueVal$\;
  \For{$Y\in 2^\Y$}
  {\label{line:compose-on-the-fly:ewinY}
    $ewin\_for\_some\_X\coloneqq\falseVal$\; $undetermined\_for\_some\_X\coloneqq\falseVal$\;
    \For{$X\in2^\X$}
    {
      $s'\coloneqq \GetSuccessor(s,X\cup Y)$\;
      $\DFSearch(s')$\;
      \If{$s'\in ewin\_state$}
      {
        $ewin\_for\_some\_X\coloneqq\trueVal$\;
        \Break\;
      }
      \ElseIf{$s'\in undetermined\_state$}
      {
        $undetermined\_for\_some\_X\coloneqq\trueVal$\;
        \Continue\;
      }
    }
    \If{$\neg ewin\_for\_some\_X$}
    {
      $ewin\_for\_all\_Y\coloneqq\falseVal$\;
      \If{$\neg undetermined\_for\_some\_X$}
      {
        $undetermined\_state.remove(s)$\;
        $awin\_state.insert(s)$\;
        \textcolor{blue}{$\rhd$ No break!.}\;
      }
    }
  }
  \If{$ewin\_for\_all\_Y$}
  {
    $undetermined\_state.remove(s)$\;
    $ewin\_state.insert(s)$\;
  }
  \If{$\IsSccRoot(s)$}
  {
    $scc\coloneqq\GetScc(s)$\;
    $\BackwardSearch(scc)$\;
  }
}

\end{algorithm}

\end{document}